\documentclass{article} 
\usepackage{iclr2027_conference,times}

\usepackage{amsmath,amsfonts,bm}

\def\eqref#1{equation~\ref{#1}}

\def\1{\bm{1}}

\DeclareMathAlphabet{\mathsfit}{\encodingdefault}{\sfdefault}{m}{sl}
\SetMathAlphabet{\mathsfit}{bold}{\encodingdefault}{\sfdefault}{bx}{n}

\usepackage{hyperref}
\usepackage{url}

\usepackage{graphicx}
\usepackage{subcaption}

\usepackage{amsthm}
\newtheorem{proposition}{Proposition}

\usepackage{amsmath}

\usepackage{booktabs}
\usepackage{multirow}
\usepackage{array}
\usepackage{xcolor}
\usepackage{colortbl}
\usepackage{graphicx}

\definecolor{FADRMBlue}{RGB}{226,239,249}
\definecolor{PSMPurple}{RGB}{241,233,248}

\newcommand{\tbscore}[2]{\ensuremath{#1_{\pm #2}}}
\newcommand{\psmup}[1]{\,{\scriptsize\ensuremath{(\uparrow #1)}}}
\newcommand{\psmdown}[1]{\,{\scriptsize\ensuremath{(\downarrow #1)}}}
\newcommand{\psmeq}[1]{\,{\scriptsize\ensuremath{(= #1)}}}

\usepackage{caption}
\usepackage{titlesec}
\titlespacing*{\section}
{0pt}{0.6ex}{0.6ex}
\titlespacing*{\subsection}
{0pt}{0.6ex}{0.6ex}
\titlespacing*{\subsubsection}
{0pt}{0.6ex}{0.6ex}
\titlespacing*{\paragraph}
{0pt}{0.5ex}{0.7em}
\usepackage{algorithm}
\usepackage{algpseudocode}
\usepackage{amssymb}

\usepackage{float}
\usepackage{placeins}

\iclrfinalcopy

\title{PSM: Dataset Distillation Based on Precise Statistical Matching by Difficulty}

\author{
\begin{minipage}{\textwidth}
Hongxu Ma\textsuperscript{1,*},
Guang Li\textsuperscript{2,*$\dagger$},
Shijie Wang\textsuperscript{3},
Dongzhan Zhou\textsuperscript{4},
Suorong Yang\textsuperscript{5},\\
Baoli Sun\textsuperscript{6},
Takahiro Ogawa\textsuperscript{2},
Miki Haseyama\textsuperscript{2},
Zhihui Wang\textsuperscript{6,$\dagger$}\\[0.6em]
{\normalfont\small
\textsuperscript{1}Zhejiang University \quad
\textsuperscript{2}Hokkaido University \quad
\textsuperscript{3}The University of Queensland\\
\textsuperscript{4}Shanghai Artificial Intelligence Laboratory\\
\textsuperscript{5}National University of Singapore \quad
\textsuperscript{6}Dalian University of Technology\\[0.4em]
\textsuperscript{*}Equal contribution. \quad
\textsuperscript{$\dagger$}Corresponding authors.\\
\texttt{guang@lmd.ist.hokudai.ac.jp}, \texttt{zhihuiwang@dlut.edu.cn}}
\end{minipage}
}
\begin{document}

\maketitle
\begin{abstract}
Dataset distillation (DD) condenses a large original set into a small distilled set with high training utility. Decoupled statistical matching methods substantially reduce distillation time and memory overhead while achieving strong performance.
However, they typically supervise all distilled data with running statistics estimated from the entire original set. These statistics mainly capture the average feature distribution while ignoring the difficulty differences among original data, making them insufficient for characterizing the difficulty structure of the original data. To address this issue, we propose \textbf{P}recise \textbf{S}tatistical \textbf{M}atching (\textbf{PSM}) by difficulty. After pretraining, PSM uses the Global Precision Score (GPS) to evaluate image difficulty, sorts the original data according to the scores, and then partitions it into IPC (Images Per Class of distilled data) difficulty groups. During distillation, Statistics Updated Again (SUA) updates the teacher’s BN running statistics using original data from each group via forward passes, providing precise supervision with corresponding difficulty for distilled data at each \(id_{\mathrm{IPC}}\). Meanwhile, Initial Sample Screening (ISS) initializes distilled data with original images from the corresponding group, providing an effective starting point for precise matching. Experiments across multiple datasets and model architectures demonstrate the effectiveness of PSM, showing that it broadens the difficulty range of distilled samples and further improves the final performance. Code will be released.

\end{abstract}

\section{Introduction}
\label{sec:introduction}
As datasets grow and neural networks advance, the computational and storage resources required for models have increased substantially~\citep{hoffmann2022training,sangermano2022sample}, becoming a major limitation on model development. Dataset distillation (DD) aims to distill a large original set into a small set with high training utility~\citep{lei2024comprehensive,liu2025evolution}, such that models trained on the distilled set can achieve performance comparable to those trained on the original set.

Representative DD methods include gradient matching~\citep{DSA}, distribution matching~\citep{GeoDM}, trajectory matching~\citep{MCT}, generative methods~\citep{SCG}, and decoupled methods~\citep{MIL}. Matching methods often rely on bi-level optimization, while generative methods require large generative models, incurring substantial computational and time costs. Recently, decoupled methods~\citep{SRe2L} based on BN (Batch Normalization) statistics matching~\citep{BN} have received increasing attention. By separating teacher pretraining from data distillation, and using the teacher's BN running statistics to supervise distilled data, they markedly reduce these costs while retaining strong performance.

However, existing decoupled methods typically apply BN running statistics estimated from the entire original set to all distilled data. These global statistics mainly characterize the average feature distribution; uniformly matching them therefore smooths out the representation differences between easy and hard data. As shown in Figure~\ref{fig:direct_demo}, the data distilled by FADRM exhibit visually similar difficulty, and Figure~\ref{fig:difficulty_ImageNette_ImageWoof} further reveals that their difficulty remains nearly constant. Consequently, the distilled data cannot adequately cover the range of feature distributions spanned by original data of varying difficulty, limiting the student's ability to learn the difficulty structure of the original data.

To address this limitation, we propose \textbf{P}recise \textbf{S}tatistical \textbf{M}atching (\textbf{PSM}) by difficulty. To align difficulty estimation with statistical matching in the same feature space, we carefully design the Global Precision Score (GPS): after pretraining, for each original image, GPS computes statistics at the inputs to the teacher's BN layers, and then measures image difficulty as their distance from the corresponding BN running statistics (under the ensemble-based paradigm, PSM averages the GPS rankings of all teachers to integrate their judgments); a higher GPS indicates greater deviation from the average feature distribution, and thus greater difficulty; based on the GPS ranking, PSM then partitions the original data within each class into IPC (Images per Class) difficulty groups.

During distillation, when optimizing the $g$-th batch of distilled data, Statistics Updated Again (SUA) feeds original data from the $g$-th group through the teacher to update its BN running statistics via forward propagation, producing supervision that more accurately characterizes the current difficulty group. Initial Sample Screening (ISS) selects real images from the same group to initialize the distilled data, aligning their initial difficulty with the target group. Together, SUA and ISS construct statistical supervision and initialization for different difficulty groups, enabling the distilled data to cover a broader difficulty range, and helping the student to learn a more complete difficulty structure.

Extensive experiments on multiple datasets demonstrate that PSM outperforms existing state-of-the-art (SOTA) methods, validating its effectiveness. Our contributions are summarized as follows:
\begin{itemize}
    \item We revisit the existing decoupled DD paradigm, in which BN running statistics estimated from the original set serve as supervision signals, and show that this paradigm limits variation in difficulty within the distilled data. To address this issue, we propose GPS, a difficulty metric closely aligned with statistical supervision, to rank the original data by difficulty.
    \item We propose PSM, a method that broadens the range of difficulty covered by the distilled data by adjusting statistical supervision and initial data: SUA updates the teacher model's BN running statistics through forward propagation; ISS selects original images from the corresponding difficulty group to initialize the distilled data.
    \item We conducted experiments on multiple datasets and model architectures, and the results show that PSM broadens the difficulty range of distilled samples, further improving the final performance and demonstrating its effectiveness.
\end{itemize}

\begin{figure}[t]
    \centering
    \begin{subfigure}[t]{0.48\textwidth}
        \centering
        \includegraphics[height=4.2cm]{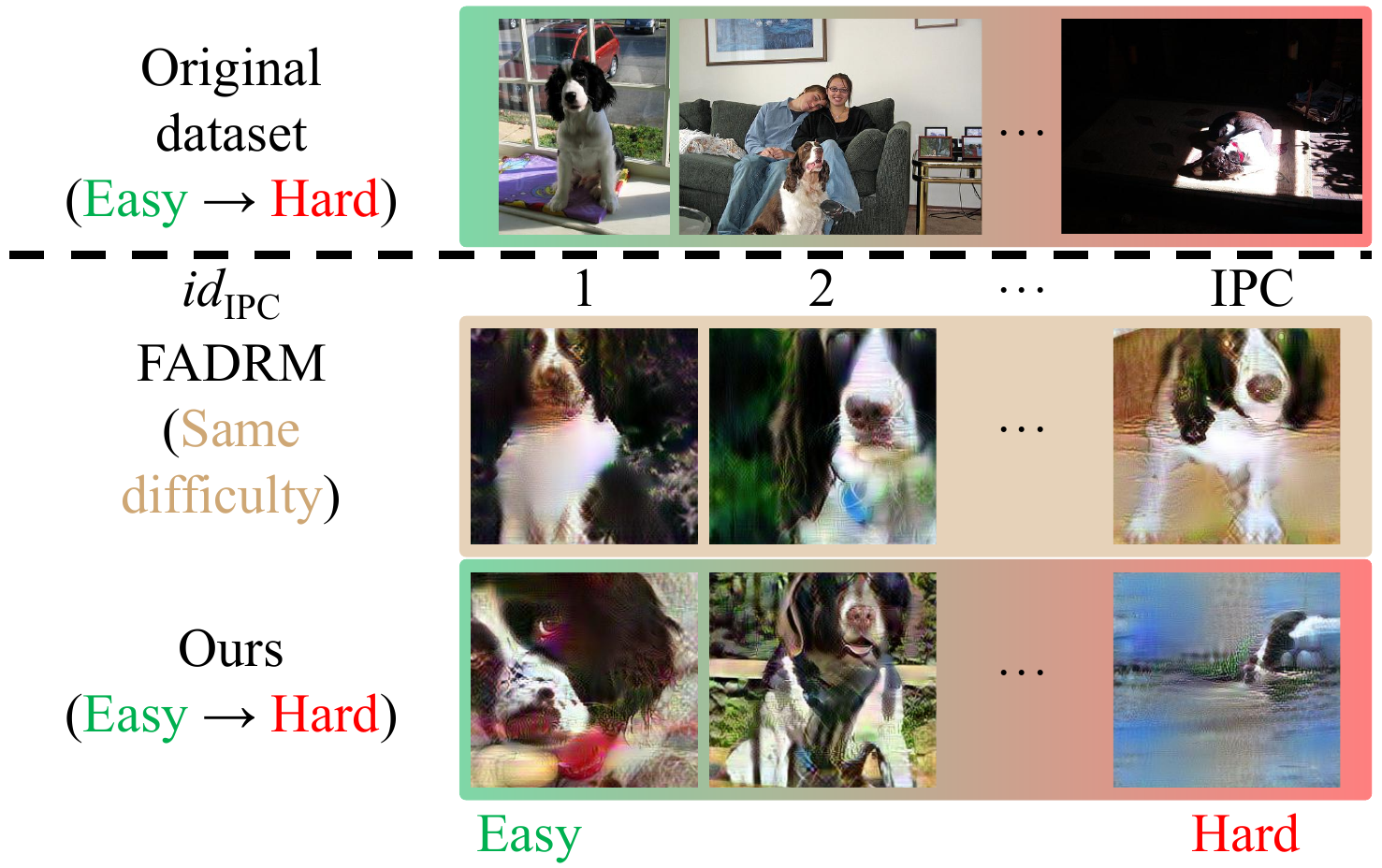}
        \caption{}
        \label{fig:direct_demo}
    \end{subfigure}%
    \hfill
    \begin{subfigure}[t]{0.48\textwidth}
        \centering
        \includegraphics[height=4.2cm]{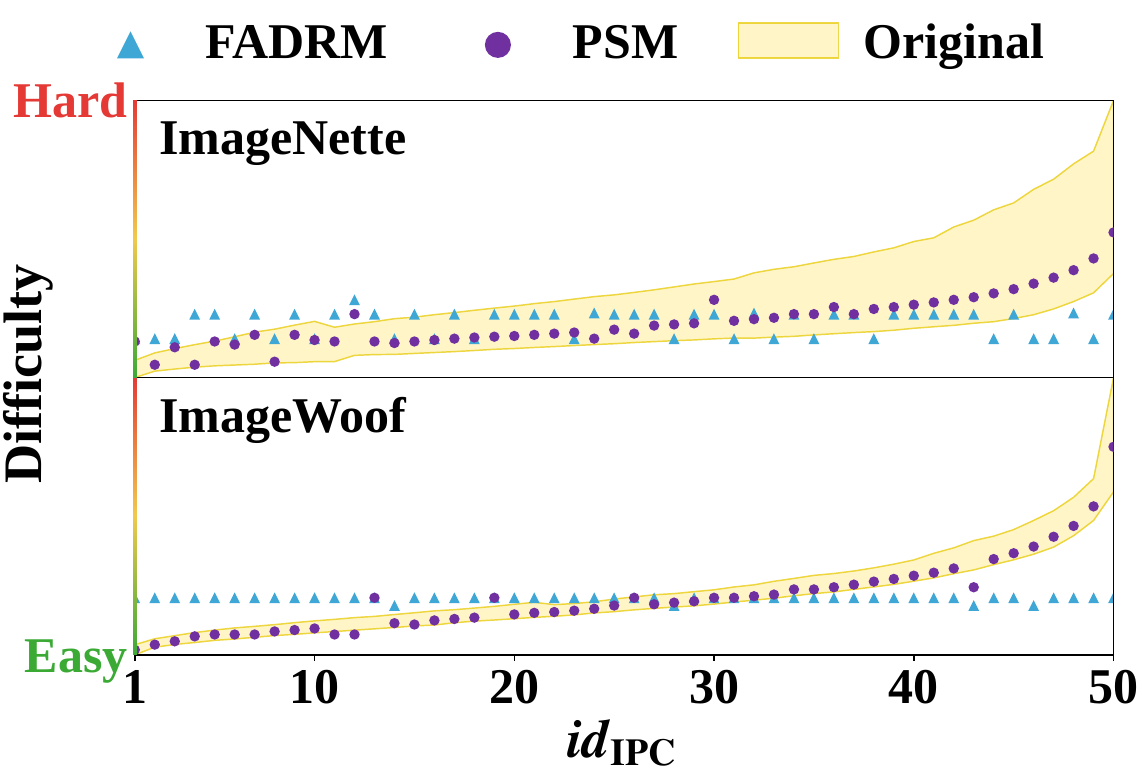}
        \caption{}
        \label{fig:difficulty_ImageNette_ImageWoof}
    \end{subfigure}

    \caption{(a) Visualizations for the English Springer class in the original ImageNette dataset and the distilled datasets. The original data exhibit a progression in difficulty from easy to hard. Ours effectively preserves this difficulty structure, whereas the data distilled by FADRM show limited variation in difficulty. (b) Difficulty distributions of original and distilled data on Imagenette/Woof. FADRM exhibits nearly constant difficulty across $id_{\mathrm{IPC}}$, indicating limited variation. In contrast, ours increases with $id_{\mathrm{IPC}}$ in line with the original data and covers a broader difficulty range.}
    \label{fig:motivation}
\end{figure}

\section{Related Work}
Dataset distillation (DD) compresses a large dataset into a small set of samples with high training utility, reducing storage and training costs while retaining performance comparable to training on the full dataset. Existing methods mainly include gradient matching~\citep{DCC,DREAM}, distribution matching~\citep{DM,li2025hdd}, trajectory matching~\citep{APM,DATM}, decoupled distillation~\citep{CDA,LPLD}, and generative distillation~\citep{HPD,cai2026evlf}. 
Further related work is discussed in the appendix~\ref{additional related work}.

\section{Preliminaries}
\subsection{Dataset Distillation}
Let $\mathcal{T}=\{(x_i,y_i)\}_{i=1}^{N}$ denote the original set, and let $\mathcal{D}=\{(\tilde{x}_j,\tilde{y}_j)\}_{j=1}^{M}$ denote the distilled set, where $M\ll N$. Let $\theta_{\mathcal{T}}$ and $\theta_{\mathcal{D}}$ represent the parameters obtained by training on $\mathcal{T}$ and $\mathcal{D}$, respectively. DD aims to construct $\mathcal{D}$ such that models trained on $\mathcal{D}$ achieve performance comparable to those trained on $\mathcal{T}$. Given a distillation budget $M$, this objective can be formulated as
\begin{equation}
\label{eq:dd}
\mathcal{D}^{*}
=
\operatorname*{arg\,min}_{\mathcal{D}:\,|\mathcal{D}|=M}
\sup_{(x,y)\in\mathcal{T}}
\left|
\mathcal{L}\!\left(f_{\theta_{\mathcal{T}}}(x),y\right)
-
\mathcal{L}\!\left(f_{\theta_{\mathcal{D}}}(x),y\right)
\right|,
\end{equation}
where $\mathcal{L}(\cdot,\cdot)$ denotes the task loss. Under a given distillation budget, the number of images per class (IPC) is the same across all classes. Typically, a student trained on the distilled set is evaluated on the original test set, and its test performance is used to measure the quality of the distilled set.

\subsection{Decoupled Dataset Distillation}
SRe$^2$L~\citep{SRe2L} introduced a three-stage framework that separates model training from data distillation. During distillation, the pretrained teacher is fixed, and only the distilled data are optimized, avoiding the inner optimization of model parameters and reducing computational overhead.
\paragraph{I. Teacher model pretraining.}
The teacher is trained on the original set $\mathcal{T}$ to learn feature representations, while its BN layers accumulate running means and variances from the training data. The resulting parameters $\theta_{\mathcal{T}}$ and BN statistics are then fixed to guide distillation.
\paragraph{II. Distillation.}
A distilled batch $\widetilde{\mathcal{B}}={(\tilde{x}_j,\tilde{y}_j)}_{j=1}^{B}$ is fed into the frozen teacher, where $B$ is typically equal to the number of classes $C$. The classification loss $\mathcal{L}_{\mathrm{cls}}$ enforces semantic consistency, while the statistics matching loss $\mathcal{L}_{\mathrm{BN}}$ aligns the batch statistics with the teacher's running statistics:
\begin{equation}
\label{eq:decoupled_distillation}
\begin{aligned}
\mathcal{L}_{\mathrm{dist}}(\widetilde{\mathcal{B}})
&=
\mathcal{L}_{\mathrm{cls}}
\left(\tilde{x},\tilde{y}\right)
+
\lambda_{\mathrm{BN}}
\mathcal{L}_{\mathrm{BN}}
\left(\{\tilde{x}_j\}_{j=1}^{B}\right), \\
\text{where}\quad
\mathcal{L}_{\mathrm{cls}}
\left(\tilde{x},\tilde{y}\right)
&=
\mathbb{E}_{(\tilde{x},\tilde{y})\sim\widetilde{\mathcal{B}}}
\left[
\mathcal{L}_{\mathrm{CE}}
\left(f_{\theta_{\mathcal{T}}}(\tilde{x}),\tilde{y}\right)
\right], \\
\mathcal{L}_{\mathrm{BN}}
\left(\{\tilde{x}_j\}_{j=1}^{B}\right)
&=
{\textstyle\sum_{\ell=1}^{L}}
\left(
\left\|{\mu}_{\ell}-\bar{\mu}_{\ell}\right\|_{2}
+
\left\|{\sigma}_{\ell}^{2}-\bar{\sigma}_{\ell}^{2}\right\|_{2}
\right),
\end{aligned}
\end{equation}
where $\mathcal{L}_{\mathrm{CE}}$ is the cross entropy loss; $\lambda_{\mathrm{BN}}$ weights statistics matching; and $L$ is the number of BN layers, indexed by $\ell$. For a BN layer with $C_{\ell}$ channels, $\mu_{\ell},\sigma_{\ell}^{2}\in\mathbb{R}^{C_{\ell}}$ are the batch mean and variance at its input, while $\bar{\mu}_{\ell},\bar{\sigma}_{\ell}^{2}\in\mathbb{R}^{C_{\ell}}$ are the corresponding teacher running statistics.

\paragraph{III. Soft label generation.}
The distilled data are fed into the pretrained teacher, whose class probability distributions are used as soft labels. These labels also preserve semantic relationships for all classes. The student is then trained on the distilled data and soft labels, receiving richer supervision.

\subsection{Batch Normalization}
Batch Normalization (BN)~\citep{BN} was originally introduced to mitigate changes in intermediate feature distributions. For features entering a BN layer, BN first computes the mean and variance of the current batch for each channel, and uses them to normalize the features. This process stabilizes feature scales across layers and improves optimization stability.

During training, each BN layer uses the current batch statistics for feature normalization, and its running mean and variance are continuously updated through an exponential moving average. Given the training batch $\mathcal{B}_{t}$ at step $t$, the running statistics of BN layer $\ell$ are updated as follows:
\begin{equation}
\label{eq:bn_running_statistics}
\bar{\mu}_{\ell,t}
=
(1-\rho)\bar{\mu}_{\ell,t-1}
+
\rho\mu_{\ell}(\mathcal{B}_{t}),
\qquad
\bar{\sigma}_{\ell,t}^{2}
=
(1-\rho)\bar{\sigma}_{\ell,t-1}^{2}
+
\rho\sigma_{\ell}^{2}(\mathcal{B}_{t}),
\end{equation}
where $\rho\in(0,1]$ controls the statistics update, a larger $\rho$ assigns more weight to the current batch.

\section{Method}

\subsection{Limitations of Decoupled Dataset Distillation}

SRe$^2$L~\citep{SRe2L} pioneered the decoupled DD paradigm, which uses the BN running statistics stored in a pretrained teacher as supervisory signals to guide the optimization, as shown in ~\eqref{eq:decoupled_distillation}. Because of its efficiency, this paradigm has attracted considerable attention. Recent studies have improved this paradigm by reducing storage overhead~\citep{HLI}, lowering computational complexity~\citep{MIL}, and enhancing performance~\citep{BPS}. However, most of these studies rely on additional components or optimization mechanisms, with limited attention paid to the fundamental supervisory signals and their influence.
\paragraph{Homogeneous Supervision.} Existing decoupled methods use the global BN running statistics stored in the pretrained teacher as the target for every distilled batch. Let \(\boldsymbol{\tau}_g\) denote the statistical supervision assigned to batch \(g\), and let \(\boldsymbol{\tau}^{\mathrm G}\) denote the global BN statistics. Then
\[
\tau_g=\tau^{\mathrm{G}},\ \forall g \quad\Longrightarrow\quad \mathcal{D}_{\mathrm{sup}}(\tau_g,\tau_h)=0,\ \forall g,h.
\]
Consequently, the statistical target is independent of the batch index, and cannot explicitly provide batch-specific difficulty supervision. Although initialization and optimization may introduce difficulty variation, this variation is not controlled by the supervision signal, making it difficult to preserve the difficulty structure of the original data, as illustrated in Figure~\ref{fig:difficulty_ImageNette_ImageWoof}.

\subsection{Dataset Distillation Based on Precise Statistical Matching by Difficulty}

Figure~\ref{fig:pipeline} illustrates the overall pipeline of the proposed method. Given the original set and a pretrained teacher, we first employ GPS to evaluate the difficulty of each original data point and rank the data within each class from easy to hard. We then divide the ordered data of each class into \(\mathrm{IPC}\) difficulty groups, with different groups representing data of varying difficulty in the original set. When distilling the \(g\)-th batch, SUA feeds the original data in difficulty group \(g\) into the teacher to further adapt its BN running statistics to the characteristics of the current group, thereby constructing distinct statistical supervision signals for the corresponding distilled batch. By matching the respective statistical supervision signals, the distilled data in different batches progressively learn original data patterns from easy to hard. Meanwhile, ISS selects original data from the corresponding difficulty group to initialize the distilled data, aligning the difficulty of the initial data with that of the target distilled batch, and providing a favorable starting point for subsequent optimization. GPS, SUA, and ISS are introduced in the following sections. See the appendix~\ref{implementation details} for pseudocode.

\begin{figure}[t]
    \centering
    \includegraphics[width=\textwidth]{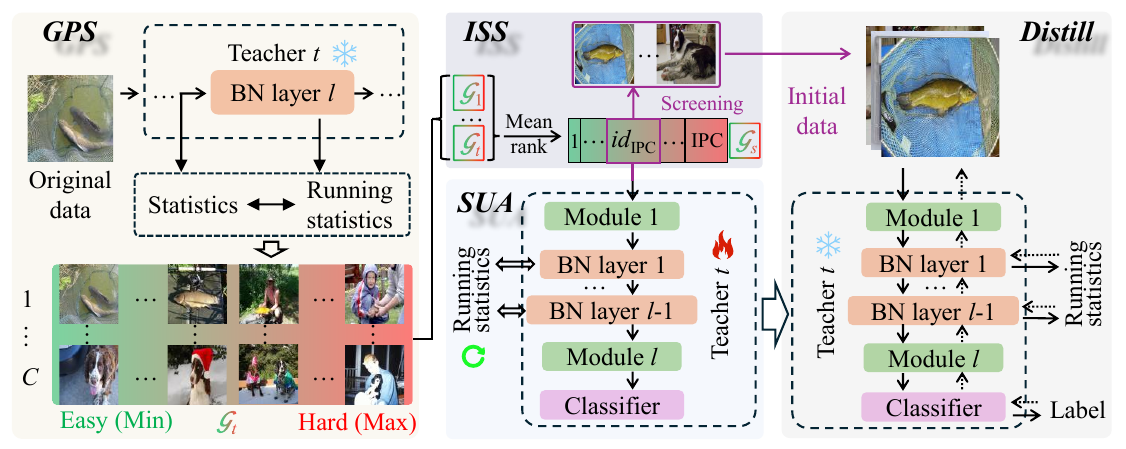}
    \caption{Overview of the PSM. First, GPS evaluates the difficulty of the original data and ranks them from easy to hard within each class. The ordered data are then divided into \(\mathrm{IPC}\) difficulty groups. When distilling the \(g\)-th batch, SUA feeds the original data in difficulty group \(g\) into the teacher for forward propagation, allowing the statistics of each BN layer to characterize the data features. Meanwhile, ISS selects original data from the same difficulty group to initialize the corresponding distilled data, thereby providing a favorable starting point for subsequent optimization.}
    \label{fig:pipeline}
\end{figure}

\subsubsection{GPS: Global Precision Score}
To accurately partition the original set by difficulty and ensure that difficulty estimation and BN statistics operate in the same feature space, we propose a new difficulty partitioning method, GPS.

\paragraph{Operation and Purpose.}
We compute the per-channel mean and variance of $x_i$ at the input to each BN layer, and compare them with the corresponding running mean and variance. GPS is defined as
\begin{equation}
\operatorname{GPS}(x_i)
=
\mathbb{E}_{k\in\{1,\ldots,K\}}
\left[
\operatorname{rank}_{c}
\left(
\mathbb{E}_{\ell\in\{1,\ldots,L_k\}}
\left[
\frac{
\left\|\bm{\mu}_{\ell,i,k}-\bar{\bm{\mu}}_{\ell,k}\right\|_2
+
\beta
\left\|
\log
\frac{
\bm{\sigma}_{\ell,i,k}^{2}+\epsilon
}{
\bar{\bm{\sigma}}_{\ell,k}^{2}+\epsilon
}
\right\|_2
}{
\sqrt{C_{\ell,k}}
}
\right]
\right)
\right].
\label{eq:gps}
\end{equation}
Here $K$ denotes the number of teachers, $\operatorname{rank}{c}(\cdot)$ is the rank of a sample within class $c$, $\beta$ balances the mean and variance distances, and $\epsilon$ ensures numerical stability. For the variance, we first compute the ratio between the sample variance and the BN running variance to reduce the effect of variance scale differences across channels. We then take the logarithm of this ratio to stabilize its numerical range. Finally, we normalize the statistical distance at each layer by $\sqrt{C_{\ell,k}}$ to reduce distance scale discrepancies caused by the number of channels across BN layers.

A larger GPS indicates that the data exhibits more pronounced statistical deviations across multiple feature layers, and thus corresponds to greater difficulty. Based on the GPS ranking, PSM divides the ordered data of each class into IPC difficulty groups of approximately equal size, which sequentially represent the original data from easy to hard. The resulting difficulty groups provide the basis for SUA to construct distinct supervision, and for ISS to select the corresponding initialization data.

\paragraph{Empirical Observation.}
A larger GPS indicates more pronounced deviations between the sample statistics and the teacher's BN running statistics. To verify the correlation between statistics deviations and classification difficulty, we sort the original samples in ascending order of GPS, and compute the teacher's cross-entropy loss on each sample's ground truth label. The empirical relationship between GPS and classification difficulty is expressed as
\begin{equation}
\operatorname{GPS}(x_i)\uparrow
\quad\Longrightarrow\quad
\mathcal{L}_{\mathrm{CE}}\!\left(f_{\theta}(x_i),y_i\right)
=
-\log p_{\theta}(y_i\mid x_i)\uparrow.
\end{equation}
Experimental results show that the cross-entropy loss generally increases along the GPS ranking. Samples with larger GPS values tend to have higher classification losses. These results indicate that GPS effectively reflects the sample difficulty perceived by the teacher, providing a basis for grouping samples from easy to hard. A detailed analysis is provided in the appendix~\ref{empirial_relationship}.

\subsubsection{SUA: Statistics Updated Again}

To convert the difficulty groups obtained by GPS into statistical supervision corresponding to individual distillation batches, we propose SUA.

\paragraph{Operation and Purpose.}
Let $\mathcal{T}_g$ denote the $g$-th difficulty group. We set $\mathrm{IPC}$ difficulty groups, with the $g$-th group corresponding to the $id_{\mathrm{IPC}}$-th distillation batch. For teacher $k$, SUA starts from the global BN running statistics $\boldsymbol{\tau}_k^{\mathrm G}$ stored in the pretrained teacher, freezes the teacher parameters $\theta_k$, and performs $\left\lfloor r_{\mathrm{fwd}}E_k^{\mathrm{pre}}\right\rfloor$ epochs of forward propagation over $\mathcal{T}_g$. During this process, only the BN running statistics are updated according to ~\eqref{eq:bn_running_statistics}. This process is expressed as
\begin{equation}
\boldsymbol{\tau}_{k}^{\mathrm{G}}
\mathrel{\raisebox{-0.7ex}{$
\xrightarrow{
f_{\theta_k=\mathrm{constant}}^{
\left\lfloor r_{\mathrm{fwd}}E_k^{\mathrm{pre}}\right\rfloor
}(\mathcal{T}_g)
}
$}}
\boldsymbol{\tau}_{k,g}^{\mathrm{SUA}}.
\end{equation}
where $r_{\mathrm{fwd}}$ denotes the ratio of forward propagation epochs, $E_k^{\mathrm{pre}}$ denotes the number of pretraining epochs for teacher $k$, $\left\lfloor r_{\mathrm{fwd}}E_k^{\mathrm{pre}}\right\rfloor$ denotes the actual number of forward propagation epochs, and $\boldsymbol{\tau}_{k,g}^{\mathrm{SUA}}$ denotes the BN running statistics obtained by SUA.

Through this update, the BN running statistics of the teacher gradually approach those of the current difficulty group, and serve as the statistical matching signal for distillation batch $g$, thereby providing each distillation batch with supervision signal corresponding to its difficulty.

\begin{proposition}[Expected Convergence; Proof in Appendix~\ref{app:the expected convergence of statistics}]
\label{prop:expected_convergence}
For any fixed teacher $k$, let $\boldsymbol{\tau}_{k,g}^{\star}$ and $\boldsymbol{\tau}_{k,g}^{(t)}$ denote the expected BN batch statistics of $\mathcal{T}_g$ and the running statistics after $t$ forward batch updates, respectively. If the forward batches are independently drawn from $\mathcal{T}_g$, then
\begin{equation}
\mathbb{E}\left[\boldsymbol{\tau}_{k,g}^{(t)}\right]
-\boldsymbol{\tau}_{k,g}^{\star}
=
(1-\rho)^t
\left(
\boldsymbol{\tau}_k^{\mathrm G}
-\boldsymbol{\tau}_{k,g}^{\star}
\right)
\xrightarrow{t\rightarrow\infty}
\boldsymbol{0}.
\end{equation}
Thus, SUA statistics converge in expectation to those of the current difficulty group. The updated statistics then serve as the matching signal for batch $g$, guiding its distilled data to match the corresponding feature statistics.
\end{proposition}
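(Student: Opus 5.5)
The plan is to unroll the exponential-moving-average recursion of \eqref{eq:bn_running_statistics} and take expectations, using linearity and the fact that the teacher parameters $\theta_k$ are frozen during SUA. Write the stacked running statistics of all BN layers of teacher $k$ as $\boldsymbol{\tau}_{k,g}^{(t)}$, with $\boldsymbol{\tau}_{k,g}^{(0)}=\boldsymbol{\tau}_k^{\mathrm G}$. Let $\boldsymbol{s}(\mathcal{B}_t)$ be the stacked batch statistics $(\mu_\ell(\mathcal{B}_t),\sigma_\ell^2(\mathcal{B}_t))_\ell$ of the $t$-th forward batch. Each layer and channel obeys the same scalar affine update, so the whole vector satisfies
\begin{equation*}
\boldsymbol{\tau}_{k,g}^{(t)}=(1-\rho)\boldsymbol{\tau}_{k,g}^{(t-1)}+\rho\,\boldsymbol{s}(\mathcal{B}_t).
\end{equation*}

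The key observation is that, because $\theta_k$ is fixed, the batch statistics $\boldsymbol{s}(\mathcal{B}_t)$ depend only on the batch $\mathcal{B}_t$ and on fixed weights. They do not depend on the running statistics. Under training-mode BN, normalization uses the batch's own statistics, so the inputs to deeper BN layers also do not depend on the running buffers. The batches are drawn i.i.d.\ from $\mathcal{T}_g$, so the $\boldsymbol{s}(\mathcal{B}_t)$ are i.i.d.\ with common mean $\mathbb{E}[\boldsymbol{s}(\mathcal{B}_t)]=\boldsymbol{\tau}_{k,g}^{\star}$. This is exactly the definition of $\boldsymbol{\tau}_{k,g}^{\star}$ as the expected BN batch statistics of $\mathcal{T}_g$. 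Taking expectations of the recursion and subtracting $\boldsymbol{\tau}_{k,g}^{\star}=(1-\rho)\boldsymbol{\tau}_{k,g}^{\star}+\rho\boldsymbol{\tau}_{k,g}^{\star}$ gives
\begin{equation*}
\mathbb{E}[\boldsymbol{\tau}_{k,g}^{(t)}]-\boldsymbol{\tau}_{k,g}^{\star}=(1-\rho)\bigl(\mathbb{E}[\boldsymbol{\tau}_{k,g}^{(t-1)}]-\boldsymbol{\tau}_{k,g}^{\star}\bigr).
\end{equation*}
Induction on $t$ then yields the factor $(1-\rho)^t(\boldsymbol{\tau}_k^{\mathrm G}-\boldsymbol{\tau}_{k,g}^{\star})$.

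As a cross-check, I would also write the closed form
\begin{equation*}
\boldsymbol{\tau}_{k,g}^{(t)}=(1-\rho)^t\boldsymbol{\tau}_k^{\mathrm G}+\rho\sum_{s=1}^{t}(1-\rho)^{t-s}\boldsymbol{s}(\mathcal{B}_s).
\end{equation*}
The weights $\rho\sum_{s}(1-\rho)^{t-s}$ sum to $1-(1-\rho)^t$, and taking expectations gives the same identity. For convergence, $\rho\in(0,1]$ gives $|1-\rho|<1$, so $(1-\rho)^t\to0$. The difference is the fixed vector $\boldsymbol{\tau}_k^{\mathrm G}-\boldsymbol{\tau}_{k,g}^{\star}$, which is finite because $\mathcal{T}_g$ is a finite set, so the difference converges to $\boldsymbol{0}$.

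The main obstacle is justifying that the batch statistics are independent of the running state and have a well-defined common expectation. This requires stating the convention that the forward passes run BN in training mode with frozen weights. It also needs a remark on variance: if the running variance is updated with the unbiased batch variance, $\boldsymbol{\tau}_{k,g}^{\star}$ should be read as the expectation of whichever batch estimator is used, not the population variance of $\mathcal{T}_g$. With that definition the argument is purely linear and needs no further assumptions.
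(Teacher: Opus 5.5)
Your proposal is correct and takes essentially the same route as the paper's proof: write the componentwise EMA recursion, use the frozen teacher to give every forward batch's statistics the common expectation $\boldsymbol{\tau}_{k,g}^{\star}$, take expectations to get the one-step contraction by $(1-\rho)$, and iterate from $\boldsymbol{\tau}_k^{\mathrm G}$. Your extra remarks are sensible clarifications of assumptions the paper leaves implicit: BN runs in training mode, so batch statistics do not depend on the running buffers, and $\boldsymbol{\tau}_{k,g}^{\star}$ should be read as the expectation of whichever variance estimator is actually used.
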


\subsubsection{ISS: Initial Sample Screening}
The initialization of distilled data significantly affects final performance. EDC~\citep{EDC} shows that real image initialization reduces the distribution gap between the initialized and original data. SelMatch~\citep{SelMatch} further shows that real samples of appropriate difficulty help introduce corresponding complex features. Motivated by these findings, we propose ISS to initialize each distillation batch with real data matching its target difficulty.

Specifically, let $\mathcal{T}_{g,c}$ denote the original data of class $c$ in difficulty group $g$. ISS further selects real images from $\mathcal{T}_{g,c}$ based on their difficulty, and uses them to initialize the distilled data $\widetilde{x}_{g,c}^{(0)}$. Since the initialization images and the statistical supervision come from the same difficulty group, they are aligned in difficulty, reducing the statistical bias at the beginning of optimization. Experiments show that ISS significantly improves final distillation performance.

\section{Experiments}
\subsection{Experimental Setup}
\paragraph{Baselines.} 
FADRM~\citep{FADRM} is a recent SOTA method for decoupled DD, achieving improved performance while substantially reducing memory and time costs, we therefore adopt it as our primary baseline and further compare with FADRM+, its extension using multiple models. SRe$^2$L~\citep{SRe2L}, a representative foundational method for decoupled DD, is also included for comparison. In addition, we include RDED~\citep{RDED} as a representative method from other DD paradigms, it constructs distilled data by assembling real images, which has been adopted by other DD methods for distilled data initialization.
\paragraph{Datasets.}
We evaluate PSM on benchmark datasets widely used in dataset distillation. For low resolution datasets, we use CIFAR-10/100~\citep{CIFAR-10/100} at $32\times32$, and Tiny-ImageNet~\citep{Tiny-ImageNet} at $64\times64$. For high-resolution datasets, we mainly use ImageNet-1K~\citep{ImageNet} at $224\times224$ and its subsets, ImageWoof (fine-grained dataset) and ImageNette.
\paragraph{Implementation Details.}
For PSM, we follow the post-evaluation protocol of FADRM. Students are trained for 1000 epochs on Tiny-ImageNet with $\mathrm{IPC}=1$ and CIFAR-10/100, and for 300 epochs in all other settings. For a fair comparison with FADRM+, we further adopt the multi-teacher variant PSM+, which consistently employs ShuffleNetV2~\citep{ShuffleNet}, ResNet18~\citep{ResNet}, MobileNetV2~\citep{MobileNetV2}, and DenseNet121~\citep{DenseNet} as teachers for distillation across all datasets. The remaining parameters are kept consistent with FADRM, while other baselines follow their original settings. See the appendix~\ref{implementation details} for more details.

\subsection{Comparison With State-of-The-Art Methods}
\subsubsection{Main Results}
In Table~\ref{tab:main_results}, methods marked with ``+'' use multiple teachers for distillation and a single student for evaluation, while the remaining methods follow the standard same-architecture evaluation protocol.
\paragraph{Low-resolution datasets.}
As shown in Table~\ref{tab:main_results}, we evaluate PSM on CIFAR-10/100 and Tiny-ImageNet. The results show that PSM outperforms its corresponding baselines in most settings when evaluated with ResNet18/50. Notably, on CIFAR-10 with $\mathrm{IPC}=10$, PSM achieves a Top-1 accuracy of 52.7\% with ResNet50, outperforming FADRM by 8.9\%, and demonstrating the effectiveness of PSM. On CIFAR-10 with $\mathrm{IPC}=1$, PSM performs comparably to its corresponding baselines. This may be mainly attributed to CIFAR-10 containing only 10 coarse categories, for which the global statistics may already adequately characterize the overall difficulty distribution. Consequently, SUA induces only limited changes in the statistics, resulting in small performance differences.

\paragraph{High-resolution datasets.}
We evaluate PSM on ImageNette, ImageWoof, and ImageNet-1K. As shown in Table~\ref{tab:main_results}, PSM achieves consistent performance gains in most settings, demonstrating its strong robustness. Performance degradation occurs mainly on ImageWoof with ResNet50 as the student under the $\mathrm{IPC}=1/10$. One possible explanation is the limited compatibility of ResNet50 with ImageWoof: when trained on the original set, ResNet18 achieves an accuracy of 85.0\%, whereas ResNet50 achieves only 80.3\%. Under low IPC, the limited distilled data may further amplify this mismatch, making it difficult for ResNet50 to learn the difficulty information preserved by PSM.

\begin{table*}[t]
\centering
\caption{Top-1 accuracy comparison of PSM, extended variant PSM+, and other SOTA methods. For each dataset, IPC, and student setting, the mean values of the \textbf{best} and \underline{second best} results are marked in \textbf{bold} and \underline{underlined}, respectively. Values in parentheses after the PSM and PSM+ results indicate their performance changes relative to FADRM and FADRM+, respectively.}
\label{tab:main_results}

\small
\setlength{\tabcolsep}{2.2pt}
\renewcommand{\arraystretch}{1.5}

\resizebox{\textwidth}{!}{%
\begin{tabular}{
@{}cc
*{2}{
    cc
    >{\columncolor{FADRMBlue}}c
    >{\columncolor{PSMPurple}}c
    >{\columncolor{FADRMBlue}}c
    >{\columncolor{PSMPurple}}c
}
@{}
}
\toprule
\multicolumn{2}{c}{Student} & \multicolumn{6}{c}{ResNet18} & \multicolumn{6}{c}{ResNet50~\citep{ResNet}} \\
\cmidrule(lr){1-2}
\cmidrule(lr){3-8}
\cmidrule(lr){9-14}

Dataset & IPC
& SRe$^2$L & RDED & FADRM & PSM & FADRM+ & PSM+ & SRe$^2$L & RDED & FADRM & PSM & FADRM+ & PSM+ \\
\midrule

\multirow{4}{*}{CIFAR-10}
& 1 & \tbscore{16.6}{0.9} & \tbscore{\underline{22.9}}{0.4} & \tbscore{19.3}{0.6} & \tbscore{20.9}{0.1}\psmup{1.6} & \tbscore{\textbf{23.7}}{0.8} & \tbscore{22.3}{0.6}\psmdown{1.4} & \tbscore{15.2}{1.3} & \tbscore{19.7}{1.7} & \tbscore{23.2}{0.7} & \tbscore{23.2}{0.2}\psmeq{0.0} & \tbscore{\textbf{23.5}}{1.3} & \tbscore{\underline{23.3}}{0.4}\psmdown{0.2} \\

& 10 & \tbscore{29.3}{0.5} & \tbscore{37.1}{0.3} & \tbscore{48.2}{0.4} & \tbscore{54.5}{1.3}\psmup{6.3} & \tbscore{\underline{55.9}}{1.0} & \tbscore{\textbf{61.1}}{0.8}\psmup{5.2} & \tbscore{30.3}{1.7} & \tbscore{32.5}{0.9} & \tbscore{43.8}{1.1} & \tbscore{52.7}{0.1}\psmup{8.9} & \tbscore{\underline{55.1}}{0.8} & \tbscore{\textbf{56.8}}{1.1}\psmup{1.7} \\

& 50 & \tbscore{45.0}{0.7} & \tbscore{62.1}{0.1} & \tbscore{80.6}{0.7} & \tbscore{81.1}{0.1}\psmup{0.5} & \tbscore{\underline{85.8}}{0.1} & \tbscore{\textbf{86.7}}{0.2}\psmup{0.9} & \tbscore{52.9}{1.3} & \tbscore{52.5}{2.0} & \tbscore{79.7}{1.2} & \tbscore{82.7}{0.7}\psmup{3.0} & \tbscore{\underline{83.7}}{0.6} & \tbscore{\textbf{86.0}}{0.9}\psmup{2.3} \\

\cmidrule(lr){2-14} & Whole dataset & \multicolumn{6}{c}{92.9} & \multicolumn{6}{c}{93.1} \\
\midrule

\multirow{4}{*}{CIFAR-100}
& 1 & \tbscore{6.6}{0.2} & \tbscore{11.1}{0.3} & \tbscore{31.3}{0.2} & \tbscore{32.9}{0.9}\psmup{1.6} & \tbscore{\underline{37.9}}{0.8} & \tbscore{\textbf{39.1}}{0.1}\psmup{1.2} & \tbscore{6.0}{0.1} & \tbscore{11.6}{0.4} & \tbscore{24.4}{1.6} & \tbscore{26.2}{1.4}\psmup{1.8} & \tbscore{\underline{33.6}}{0.8} & \tbscore{\textbf{34.9}}{1.3}\psmup{1.3} \\

& 10 & \tbscore{27.0}{0.4} & \tbscore{42.6}{0.2} & \tbscore{64.7}{0.4} & \tbscore{\textbf{66.7}}{0.4}\psmup{2.0} & \tbscore{62.5}{0.2} & \tbscore{\underline{66.1}}{0.2}\psmup{3.6} & \tbscore{35.4}{1.9} & \tbscore{50.3}{0.4} & \tbscore{61.8}{0.5} & \tbscore{\underline{64.6}}{0.1}\psmup{2.8} & \tbscore{62.3}{0.2} & \tbscore{\textbf{65.7}}{0.4}\psmup{3.4} \\

& 50 & \tbscore{50.2}{0.4} & \tbscore{62.6}{0.1} & \tbscore{69.3}{0.3} & \tbscore{\underline{69.7}}{0.3}\psmup{0.4} & \tbscore{67.4}{0.4} & \tbscore{\textbf{70.1}}{0.1}\psmup{2.7} & \tbscore{52.9}{0.1} & \tbscore{66.8}{0.3} & \tbscore{67.6}{0.5} & \tbscore{\underline{69.8}}{0.1}\psmup{2.2} & \tbscore{68.2}{0.1} & \tbscore{\textbf{70.7}}{0.2}\psmup{2.5} \\

\cmidrule(lr){2-14}
& Whole dataset & \multicolumn{6}{c}{73.6} & \multicolumn{6}{c}{74.4} \\
\midrule

\multirow{4}{*}{Tiny-ImageNet}
& 1 & \tbscore{2.6}{0.1} & \tbscore{9.7}{0.4} & \tbscore{28.6}{0.1} & \tbscore{30.6}{1.3}\psmup{2.0} & \tbscore{\underline{32.4}}{0.2} & \tbscore{\textbf{35.4}}{0.7}\psmup{3.0} & \tbscore{5.1}{0.1} & \tbscore{6.5}{0.5} & \tbscore{31.1}{0.4} & \tbscore{\underline{31.3}}{0.7}\psmup{0.2} & \tbscore{31.0}{1.3} & \tbscore{\textbf{32.9}}{0.9}\psmup{1.9} \\

& 10 & \tbscore{16.1}{0.2} & \tbscore{41.9}{0.2} & \tbscore{46.5}{0.4} & \tbscore{\underline{48.1}}{0.3}\psmup{1.6} & \tbscore{47.3}{1.1} & \tbscore{\textbf{48.4}}{0.3}\psmup{1.1} & \tbscore{43.0}{0.5} & \tbscore{36.9}{0.4} & \tbscore{47.5}{0.3} & \tbscore{\underline{48.0}}{0.3}\psmup{0.5} & \tbscore{47.1}{0.4} & \tbscore{\textbf{48.2}}{0.9}\psmup{1.1} \\

& 50 & \tbscore{41.1}{0.4} & \tbscore{\textbf{58.2}}{0.1} & \tbscore{51.4}{0.1} & \tbscore{51.7}{0.1}\psmup{0.3} & \tbscore{52.0}{1.0} & \tbscore{\underline{52.1}}{0.1}\psmup{0.1} & \tbscore{58.3}{0.1} & \tbscore{48.0}{0.8} & \tbscore{\textbf{57.8}}{0.1} & \tbscore{\underline{56.6}}{0.7}\psmdown{1.2} & \tbscore{52.2}{1.8} & \tbscore{53.0}{0.2}\psmup{0.8} \\

\cmidrule(lr){2-14} & Whole dataset & \multicolumn{6}{c}{60.3} & \multicolumn{6}{c}{63.4} \\
\midrule

\multirow{4}{*}{ImageNette}
& 1 & \tbscore{19.1}{1.1} & \tbscore{\textbf{35.8}}{1.0} & \tbscore{28.2}{0.6} & \tbscore{25.0}{0.5}\psmdown{3.2} & \tbscore{30.5}{1.3} & \tbscore{\underline{32.5}}{0.5}\psmup{2.0} & \tbscore{13.6}{0.5} & \tbscore{23.6}{1.4} & \tbscore{23.7}{0.4} & \tbscore{22.9}{0.8}\psmdown{0.8} & \tbscore{\underline{25.5}}{0.7} & \tbscore{\textbf{29.2}}{0.6}\psmup{3.7} \\

& 10 & \tbscore{29.4}{3.0} & \tbscore{61.4}{0.4} & \tbscore{64.0}{0.1} & \tbscore{67.1}{0.2}\psmup{3.1} & \tbscore{\underline{68.5}}{0.4} & \tbscore{\textbf{69.4}}{0.8}\psmup{0.9} & \tbscore{32.5}{1.1} & \tbscore{52.6}{2.8} & \tbscore{60.9}{0.5} & \tbscore{62.4}{0.4}\psmup{1.5} & \tbscore{\underline{66.3}}{0.8} & \tbscore{\textbf{67.6}}{0.5}\psmup{1.3} \\

& 50 & \tbscore{40.9}{0.3} & \tbscore{80.4}{0.4} & \tbscore{82.8}{0.6} & \tbscore{84.0}{0.3}\psmup{1.2} & \tbscore{\underline{84.6}}{0.7} & \tbscore{\textbf{85.1}}{0.3}\psmup{0.5} & \tbscore{60.8}{1.0} & \tbscore{74.5}{2.5} & \tbscore{81.6}{0.4} & \tbscore{82.3}{0.3}\psmup{0.7} & \tbscore{\underline{85.4}}{0.3} & \tbscore{\textbf{85.9}}{0.2}\psmup{0.5} \\

\cmidrule(lr){2-14} & Whole dataset & \multicolumn{6}{c}{90.2} & \multicolumn{6}{c}{89.1} \\
\midrule

\multirow{4}{*}{ImageWoof}
& 1 & \tbscore{13.3}{0.5} & \tbscore{\textbf{20.8}}{1.2} & \tbscore{19.3}{0.3} & \tbscore{\underline{20.2}}{0.7}\psmup{0.9} & \tbscore{19.4}{0.5} & \tbscore{14.9}{0.9}\psmdown{4.5} & \tbscore{12.2}{2.2} & \tbscore{\underline{21.1}}{1.9} & \tbscore{\textbf{21.6}}{0.5} & \tbscore{16.5}{0.7}\psmdown{5.1} & \tbscore{18.4}{0.2} & \tbscore{15.1}{0.7}\psmdown{3.3} \\

& 10 & \tbscore{20.2}{0.2} & \tbscore{38.5}{2.1} & \tbscore{43.8}{0.2} & \tbscore{44.5}{0.5}\psmup{0.7} & \tbscore{\underline{47.2}}{0.2} & \tbscore{\textbf{48.3}}{0.6}\psmup{1.1} & \tbscore{19.8}{1.4} & \tbscore{40.9}{3.1} & \tbscore{37.8}{0.6} & \tbscore{34.0}{1.0}\psmdown{3.8} & \tbscore{\textbf{44.9}}{0.9} & \tbscore{\underline{43.5}}{1.2}\psmdown{1.4} \\

& 50 & \tbscore{23.3}{0.3} & \tbscore{68.5}{0.7} & \tbscore{67.7}{1.0} & \tbscore{69.6}{0.9}\psmup{1.9} & \tbscore{\underline{73.0}}{0.6} & \tbscore{\textbf{74.4}}{0.5}\psmup{1.4} & \tbscore{35.7}{0.4} & \tbscore{59.8}{3.2} & \tbscore{66.5}{1.1} & \tbscore{68.0}{1.0}\psmup{1.5} & \tbscore{\underline{72.7}}{0.7} & \tbscore{\textbf{73.7}}{0.7}\psmup{1.0} \\

\cmidrule(lr){2-14} & Whole dataset & \multicolumn{6}{c}{85} & \multicolumn{6}{c}{80.3} \\
\midrule

\multirow{3}{*}{ImageNet-1K}

& 10 & \tbscore{21.3}{0.6} & \tbscore{42.0}{0.1} & \tbscore{47.8}{0.4} & \tbscore{48.8}{0.2}\psmup{1.0} & \tbscore{\underline{50.8}}{0.1} & \tbscore{\textbf{51.6}}{0.3}\psmup{0.8} & \tbscore{28.4}{0.1} & -- & \tbscore{45.6}{1.6} & \tbscore{45.3}{0.8}\psmdown{0.3} & \tbscore{\underline{56.9}}{0.2} & \tbscore{\textbf{57.1}}{0.5}\psmup{0.2} \\

& 50 & \tbscore{46.8}{0.2} & \tbscore{56.5}{0.1} & \tbscore{\underline{60.9}}{0.1} & \tbscore{\textbf{61.3}}{0.1}\psmup{0.4} & \tbscore{59.2}{0.2} & \tbscore{59.8}{0.1}\psmup{0.6} & \tbscore{55.6}{0.3} & -- & \tbscore{65.0}{0.1} & \tbscore{65.6}{0.2}\psmup{0.6} & \tbscore{\underline{66.4}}{0.3} & \tbscore{\textbf{66.6}}{0.1}\psmup{0.2} \\

\cmidrule(lr){2-14} & Whole dataset & \multicolumn{6}{c}{69.8} & \multicolumn{6}{c}{76.1} \\

\bottomrule
\end{tabular}%
}
\end{table*}

\subsubsection{Difficulty and Efficiency Analysis.}

\paragraph{Difficulty.} 
Figure~\ref{fig:difficulty_CIFAR-10_CIFAR-100} shows the difficulty distributions on CIFAR-10/100. Together with Figure~\ref{fig:difficulty_ImageNette_ImageWoof}, these results show that the difficulty of the data distilled by PSM follows a trend consistent with that of the original data and spans a broad range, whereas the difficulty of the data distilled by FADRM remains nearly constant, exhibiting limited variation. These results demonstrate that PSM effectively preserves the difficulty structure of the original data across datasets of different scales and resolutions, enabling students to learn sample patterns ranging from easy to hard more comprehensively, and validating the robustness and effectiveness of PSM.

\paragraph{Efficiency.}
To evaluate the additional overhead of GPS computation and SUA forward passes, we measure their runtime and peak GPU memory. Since PSM and FADRM use the same distillation process, their efficiency difference mainly arises from these two steps. The costs of SUA and distillation are measured for the distilled batch corresponding to a single $id_{\mathrm{IPC}}$. As shown in Figure~\ref{fig:efficiency}, GPS, SUA, and distillation require similar amounts of memory on small datasets, while distillation requires substantially more memory on Tiny-ImageNet. GPS and SUA also require much less time than distillation because they involve no gradient computation. Therefore, PSM introduces limited additional overhead while maintaining high computational efficiency and improving performance.




\begin{figure}[t]
    \centering

    \begin{subfigure}[t]{0.48\textwidth}
        \centering
        \includegraphics[width=0.8\linewidth]
        {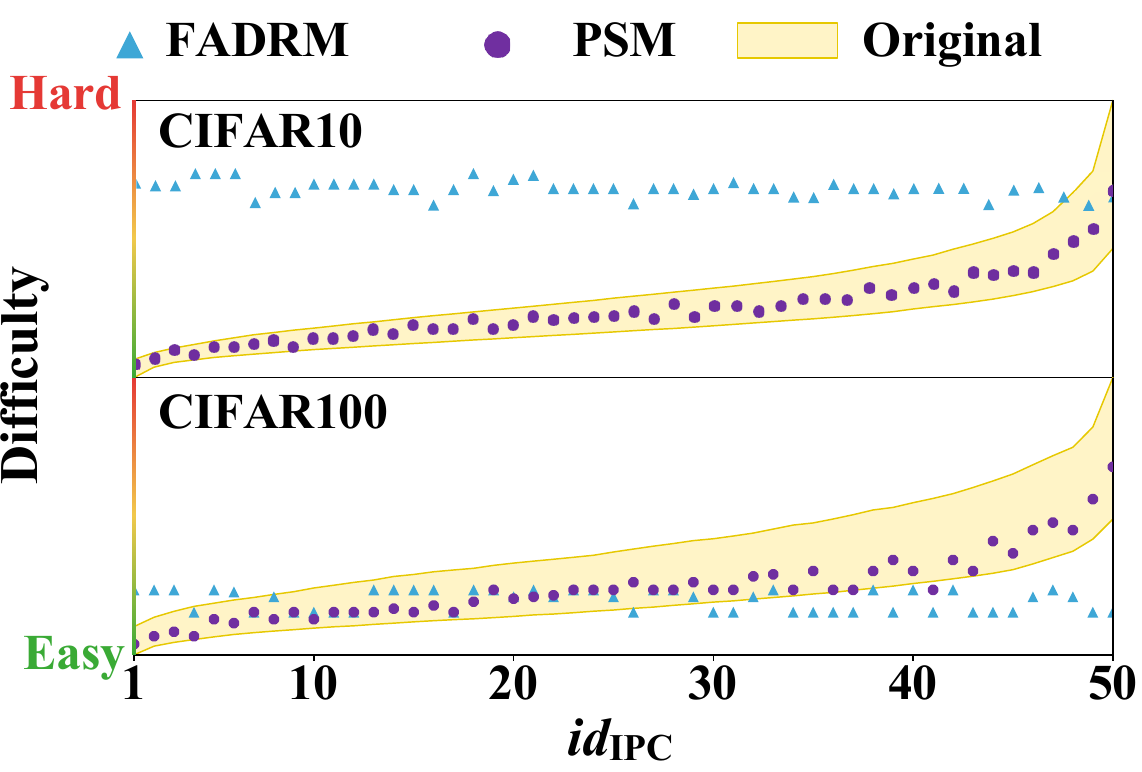}
        \caption{}
        \label{fig:difficulty_CIFAR-10_CIFAR-100}
    \end{subfigure}
    \hfill
    \begin{subfigure}[t]{0.48\textwidth}
        \centering
        \includegraphics[height=3.5cm,keepaspectratio]
        {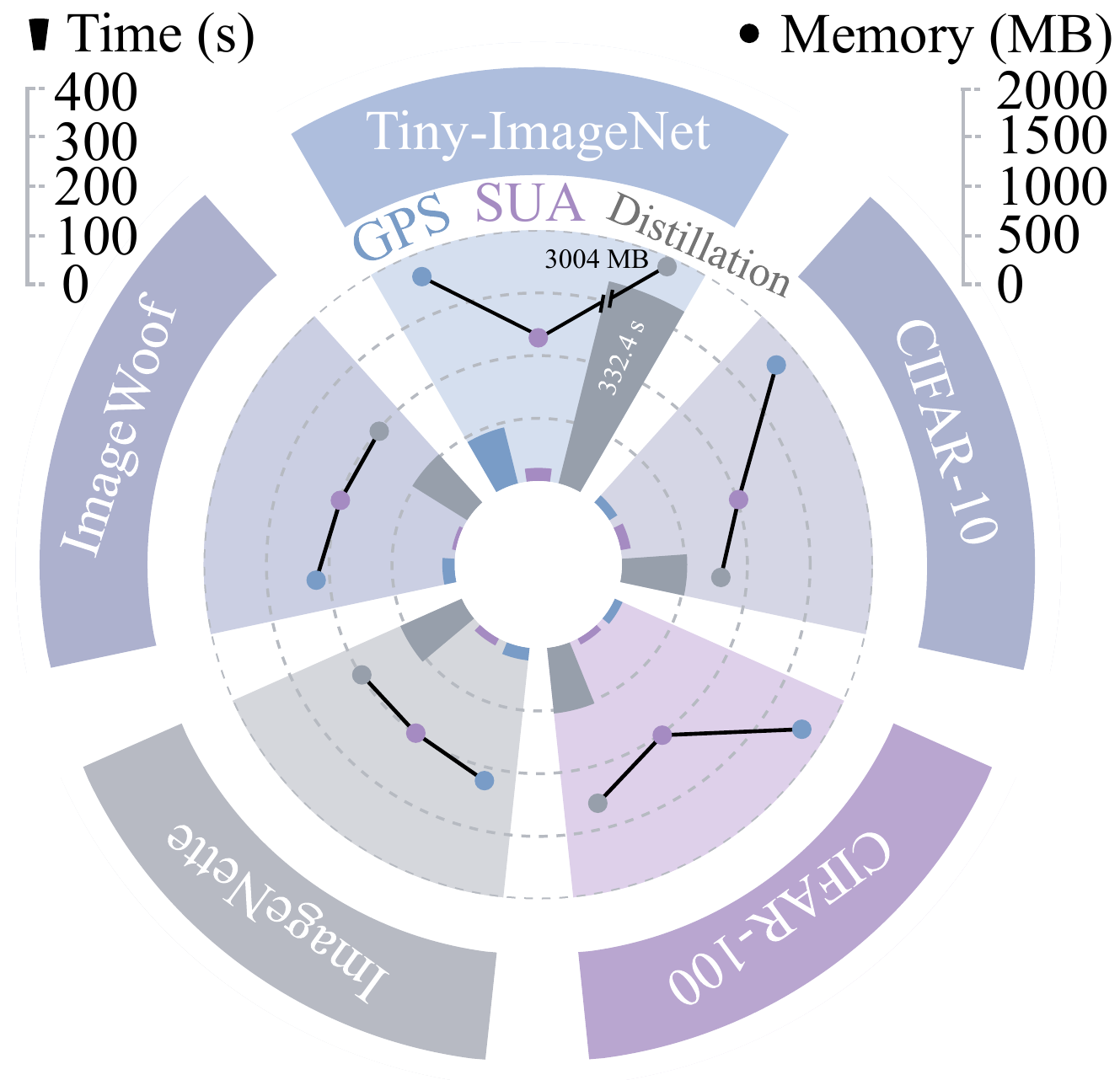}
        \caption{}
        \label{fig:efficiency}
    \end{subfigure}

    \caption{(a) With ResNet18 as the teacher and $\mathrm{IPC}=50$, the difficulty distributions of the original data and the data distilled by FADRM and PSM on CIFAR-10/100. (b) Runtime and peak GPU memory usage of GPS, as well as SUA and distillation for a single $id_{\mathrm{IPC}}$, with $\mathrm{IPC}=10$. ``//'' indicates that the corresponding value exceeds the plotting range.}
    \label{fig:difficulty_and_efficiency}
\end{figure}

\subsubsection{Cross-architecture Generalization.}

\paragraph{Small-scale datasets.}
We evaluate cross-architecture generalization on CIFAR-10/100 and ImageNette/Woof using students with different scales and architectures. As shown in Table~\ref{tab:cross_architecture_small_scale}, under most settings, PSM consistently outperforms FADRM across all datasets and student architectures. For example, on ImageNette, PSM outperforms FADRM by 6.5\% when using ResNet101 (44.5M parameters) as the student, demonstrating its strong cross-architecture generalization ability.

\begin{table*}[t]
\centering
\caption{Cross-architecture Top-1 accuracy on small-scale datasets with $\mathrm{IPC}=10$. FADRM and PSM use ResNet18 as the teacher. Results evaluated with ResNet18 as the student are also reported.}
\label{tab:cross_architecture_small_scale}

\small
\setlength{\tabcolsep}{3pt}
\renewcommand{\arraystretch}{1.5}

\resizebox{\textwidth}{!}{%
\begin{tabular}{
@{}cc
*{4}{
    >{\columncolor{FADRMBlue}}c
    >{\columncolor{PSMPurple}}c
}
@{}
}
\toprule
\multicolumn{2}{c}{Datasets} & \multicolumn{2}{c}{CIFAR-10} & \multicolumn{2}{c}{CIFAR-100} & \multicolumn{2}{c}{ImageNette} & \multicolumn{2}{c}{ImageWoof} \\
\cmidrule(lr){1-2}
\cmidrule(lr){3-4}
\cmidrule(lr){5-6}
\cmidrule(lr){7-8}
\cmidrule(lr){9-10}

Student & Parameters & FADRM & PSM & FADRM & PSM & FADRM & PSM & FADRM & PSM \\
\midrule

ShuffleNetV2 & 2.3M & \tbscore{39.3}{1.0} & \tbscore{41.5}{0.8}\psmup{2.2} & \tbscore{57.2}{0.3} & \tbscore{58.6}{0.3}\psmup{1.4} & \tbscore{48.4}{1.1} & \tbscore{52.1}{0.2}\psmup{3.7} & \tbscore{30.5}{2.2} & \tbscore{28.9}{1.6}\psmdown{1.6} \\

MobileNetV2       & 3.4M & \tbscore{43.0}{0.4} & \tbscore{41.3}{0.5}\psmdown{1.7} & \tbscore{56.4}{0.2} & \tbscore{57.8}{0.3}\psmup{1.4} & \tbscore{54.2}{0.2} & \tbscore{56.1}{0.6}\psmup{1.9} & \tbscore{30.1}{1.0} & \tbscore{32.0}{1.4}\psmup{1.9} \\

DenseNet121       & 8.0M & \tbscore{48.0}{0.6} & \tbscore{46.0}{1.0}\psmdown{2.0} &\tbscore{62.1}{0.3} & \tbscore{63.9}{0.2}\psmup{1.8} & \tbscore{66.3}{0.6} & \tbscore{67.8}{1.0}\psmup{1.5} & \tbscore{43.0}{1.0} & \tbscore{43.6}{0.5}\psmup{0.6} \\

ResNet18          & 11.7M & \tbscore{48.2}{0.4} & \tbscore{54.5}{1.3}\psmup{6.3} & \tbscore{64.7}{0.4} & \tbscore{66.7}{0.4}\psmup{2.0} & \tbscore{64.0}{0.1} & \tbscore{67.1}{0.2}\psmup{3.1} & \tbscore{43.8}{0.2} & \tbscore{44.5}{0.5}\psmup{0.7} \\

ResNet50          & 25.6M & \tbscore{36.5}{1.0} & \tbscore{40.2}{1.3}\psmup{3.7} & \tbscore{60.2}{0.2} & \tbscore{62.6}{0.7}\psmup{2.4} & \tbscore{62.4}{0.7} & \tbscore{65.0}{1.0}\psmup{2.6} & \tbscore{40.5}{0.4} & \tbscore{40.5}{1.1}\psmeq{0.0} \\

ResNet101
                  & 44.5M & \tbscore{39.8}{0.1} & \tbscore{38.4}{1.0}\psmdown{1.4} & \tbscore{59.3}{1.0} & \tbscore{61.7}{0.4}\psmup{2.4} & \tbscore{55.7}{0.7} & \tbscore{62.2}{0.8}\psmup{6.5} & \tbscore{36.9}{0.7} & \tbscore{37.2}{0.2}\psmup{0.3} \\

\bottomrule
\end{tabular}%
}
\end{table*}

\paragraph{Large-scale datasets.}
We conduct cross-architecture generalization experiments on Tiny-ImageNet and ImageNet-1K, and further include DeiT-Ti (5.7M parameters), which adopts a Transformer architecture, to evaluate the generalization of PSM across different architectures. As shown in Table~\ref{tab:cross_architecture_large_scale}, PSM performs well with both ResNet architecture and the newer DeiT architecture, demonstrating strong cross-architecture generalization on large-scale datasets.

\begin{table*}[t]
\centering

\begin{minipage}[t]{0.52\textwidth}
\vspace{0pt}
\centering

\small
\setlength{\tabcolsep}{4pt}
\renewcommand{\arraystretch}{1.8}

\resizebox{\linewidth}{!}{%
\begin{tabular}{
@{}cc
*{2}{
    >{\columncolor{FADRMBlue}}c
    >{\columncolor{PSMPurple}}c
}
@{}
}
\toprule
\multicolumn{2}{c}{Datasets} & \multicolumn{2}{c}{Tiny-ImageNet} & \multicolumn{2}{c}{ImageNet-1K} \\
\cmidrule(lr){1-2}
\cmidrule(lr){3-4}
\cmidrule(lr){5-6}

Student & Parameters & FADRM & PSM & FADRM & PSM \\
\midrule

DeiT-Ti & 5.7M & \tbscore{18.6}{0.3} & \tbscore{19.9}{0.2}\psmup{1.3} & \tbscore{36.5}{0.5} & \tbscore{37.3}{0.1}\psmup{0.8} \\

ResNet18 & 11.7M & \tbscore{51.4}{0.1} & \tbscore{51.7}{0.1}\psmup{0.3} & \tbscore{60.9}{0.1} & \tbscore{61.3}{0.1}\psmup{0.4} \\

ResNet101 & 44.5M & \tbscore{51.1}{1.8} & \tbscore{53.1}{0.9}\psmup{2.0} & \tbscore{65.5}{0.1} & \tbscore{65.2}{0.1}\psmdown{0.3} \\

\bottomrule
\end{tabular}%
}
\end{minipage}
\hfill
\begin{minipage}[t]{0.45\textwidth}
\vspace{0pt}

\caption{Cross-architecture Top-1 accuracy on large-scale datasets with $\mathrm{IPC}=50$. To further evaluate the generalization of the distilled data across different models, we also use DeiT~\citep{DeiT}, which adopts a Transformer~\citep{Transformer} architecture, as a student.}
\label{tab:cross_architecture_large_scale}

\end{minipage}
\end{table*}

\subsection{Ablation Study}

\paragraph{Difficulty Calculation Methods.}
We compare GPS with existing difficulty calculation methods, including Forgetting Score~\citep{forgetting_score}, Confidence Score~\citep{confidence_score}, and Logits~\citep{logits}. As shown in Table~\ref{tab:difficulty_calculation}, GPS achieves the highest Top-1 accuracy on all datasets, indicating that it effectively partitions the original data into groups with distinct difficulties, provides reliable difficulty groups for SUA, and validates the effectiveness of PSM.

\begin{table}[!t]
\centering

\begin{minipage}[t]{0.49\textwidth}
\vspace{0pt}
\centering

\captionof{table}{Top-1 accuracy of different difficulty calculation methods with $\mathrm{IPC}=10$ and ResNet18 as both the teacher and student.}
\label{tab:difficulty_calculation}

\small
\setlength{\tabcolsep}{4pt}
\renewcommand{\arraystretch}{1.3}

\resizebox{!}{0.95cm}{%
\begin{tabular}{@{}ccccc@{}}
\toprule
Difficulty Calculation & CIFAR-10 & CIFAR-100 & ImageNette & ImageWoof \\
\midrule

Forgetting score & \tbscore{\underline{48.5}}{0.2} & \tbscore{65.3}{0.2} & \tbscore{\underline{65.6}}{0.3} & \tbscore{39.6}{1.1} \\

Confidence score & \tbscore{43.3}{0.5} & \tbscore{\underline{65.7}}{0.6} & \tbscore{64.0}{0.4} & \tbscore{42.1}{0.6} \\

Logits & \tbscore{43.2}{1.1} & \tbscore{\underline{65.7}}{0.3} & \tbscore{64.8}{0.9} & \tbscore{\underline{43.8}}{0.6} \\

GPS & \tbscore{\textbf{54.5}}{1.3} & \tbscore{\textbf{66.7}}{0.4} & \tbscore{\textbf{67.1}}{0.2} & \tbscore{\textbf{44.5}}{0.5} \\

\bottomrule
\end{tabular}%
}
\end{minipage}
\hfill
\begin{minipage}[t]{0.49\textwidth}
\vspace{0pt}
\centering

\captionof{table}{Top-1 accuracy in the ablation study of SUA and ISS with $\mathrm{IPC}=10$ and ResNet18 as both the teacher and student.}
\label{tab:sua_iss_ablation}

\small
\setlength{\tabcolsep}{4pt}
\renewcommand{\arraystretch}{1.3}

\resizebox{!}{0.95cm}{%
\begin{tabular}{@{}cccccc@{}}
\toprule
SUA & ISS & CIFAR-10 & CIFAR-100 & ImageNette & ImageWoof \\
\midrule

{} & {} & \tbscore{48.2}{0.4} & \tbscore{64.7}{0.4} & \tbscore{64.0}{0.1} & \tbscore{\underline{43.8}}{0.2} \\

$\checkmark$ & {} & \tbscore{\underline{50.7}}{0.5} & \tbscore{65.2}{0.4} & \tbscore{\underline{65.9}}{0.9} & \tbscore{42.1}{1.3}\\

{} & $\checkmark$ & \tbscore{47.6}{0.1} & \tbscore{\underline{65.3}}{0.2} & \tbscore{65.8}{0.4} & \tbscore{40.0}{0.8} \\

$\checkmark$ & $\checkmark$ & \tbscore{\textbf{54.5}}{1.3} & \tbscore{\textbf{66.7}}{0.4} & \tbscore{\textbf{67.1}}{0.2} & \tbscore{\textbf{44.5}}{0.5} \\

\bottomrule
\end{tabular}%
}
\end{minipage}

\end{table}

\paragraph{Epoch Ratio in SUA and Screening Method in ISS.} SUA updates the teacher BN statistics through forward passes. We multiply the epoch ratio $r_{\mathrm{fwd}} \in \{0.1, 0.2, \ldots, 1.0\}$ by the pretraining epochs $E_k^{\mathrm{pre}}$ of teacher $k$ to determine the number of SUA forward epochs. For ISS, we compare four screening methods within each difficulty group: Random samples randomly, while Front, Middle, and Back select images from the front, middle, and back of the difficulty ranking, respectively.

We systematically study $r_{\mathrm{fwd}}$ and ISS methods on CIFAR-10/100 and ImageNette/Woof. As shown in Figure~\ref{fig:EpochRatio_SelectMethod}, Back performs best on CIFAR-10/100 and ImageWoof, while Front performs best on ImageNette. Under the optimal methods, CIFAR-10/100 (54.5\% / 66.7\%) and ImageNette (67.1\%) achieve the highest accuracy at $r_{\mathrm{fwd}}=0.9/1.0$, whereas ImageWoof (44.5\%) performs best at $r_{\mathrm{fwd}}=0.4$. Too few forward passes may update statistics insufficiently, while too many may overadapt them to the current difficulty group. Thus, the optimal $r_{\mathrm{fwd}}$ depends on the dataset distribution.


\begin{figure}[t]
    \centering
    \captionsetup{skip=5pt}
    \includegraphics[width=\textwidth]{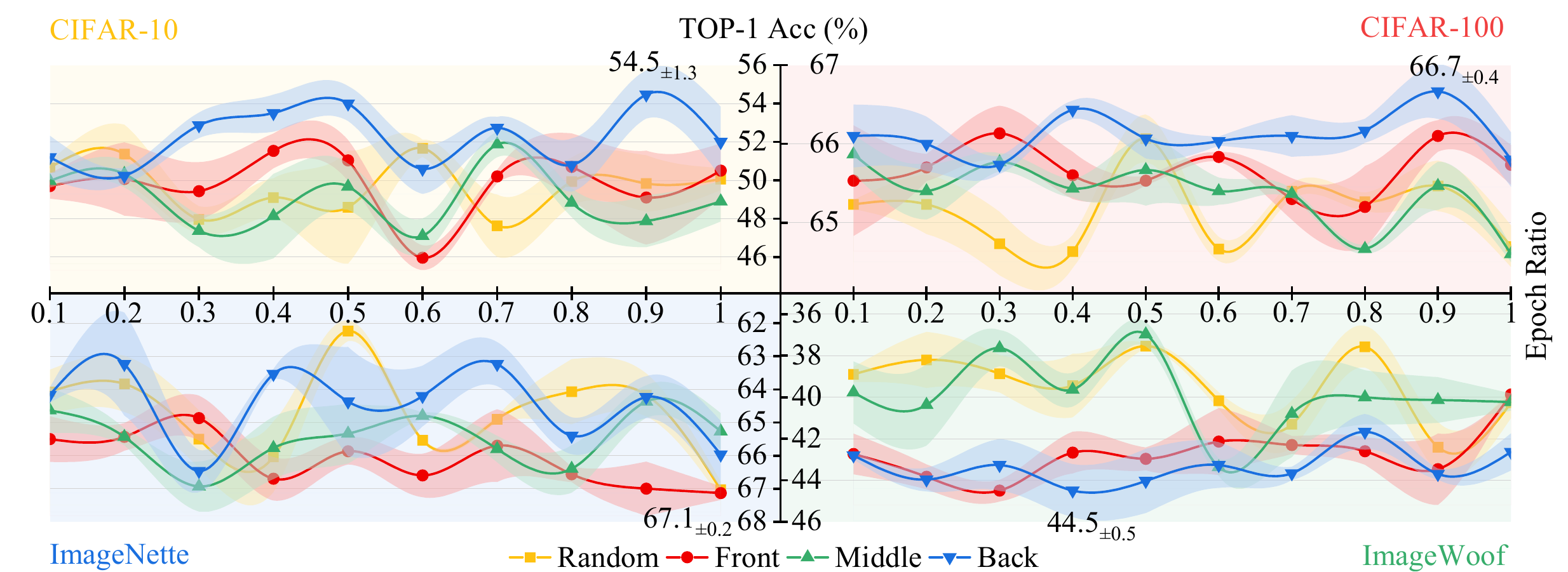}
    \caption{Hyperparameter study of SUA and ISS. On CIFAR-10/100 and ImageNette/Woof, we set $\mathrm{IPC}=10$ and use ResNet18 as both the teacher and student. We investigate the forward epoch ratio $r_{\mathrm{fwd}} \in \{0.1, 0.2, \ldots, 1.0\}$ in SUA and the screening method (Random, Front, Middle, Back) in ISS.}
    \label{fig:EpochRatio_SelectMethod}
\end{figure}

\paragraph{Ablation of Mechanisms.}
SUA and ISS control distilled data difficulty through supervision and initialization, respectively. As shown in Table~\ref{tab:sua_iss_ablation}, removing both reduces PSM to FADRM. Individually, they improve several datasets but hurt ImageWoof, likely due to mismatched initialization and supervision. Jointly, they achieve the best results across all datasets, demonstrating their complementarity in constructing the difficulty structure.

During SUA, we use the same batch size as pretraining. For PSM+, all teachers share the same $r_{\mathrm{fwd}}$ and pretraining batch size. Further details and ablations are provided in the appendix~\ref{more results} and~\ref{implementation details}.

\subsection{Visualization}
Figure~\ref{fig:Visualization} visualizes the effect of PSM on distilled data difficulty. Spearman's $\rho$~\citep{spearman} measures agreement between the difficulty trends of distilled and original data, with values closer to 1 indicating stronger agreement. FADRM produces similar data difficulties and the lowest $\rho$. Using ISS or SUA introduces clear difficulty variation, while combining both yields the clearest variation and the highest $\rho$, confirming that PSM effectively captures the difficulty structure.

\begin{figure}[t]
    \centering
    \captionsetup{skip=5pt}
    \includegraphics[width=\textwidth]{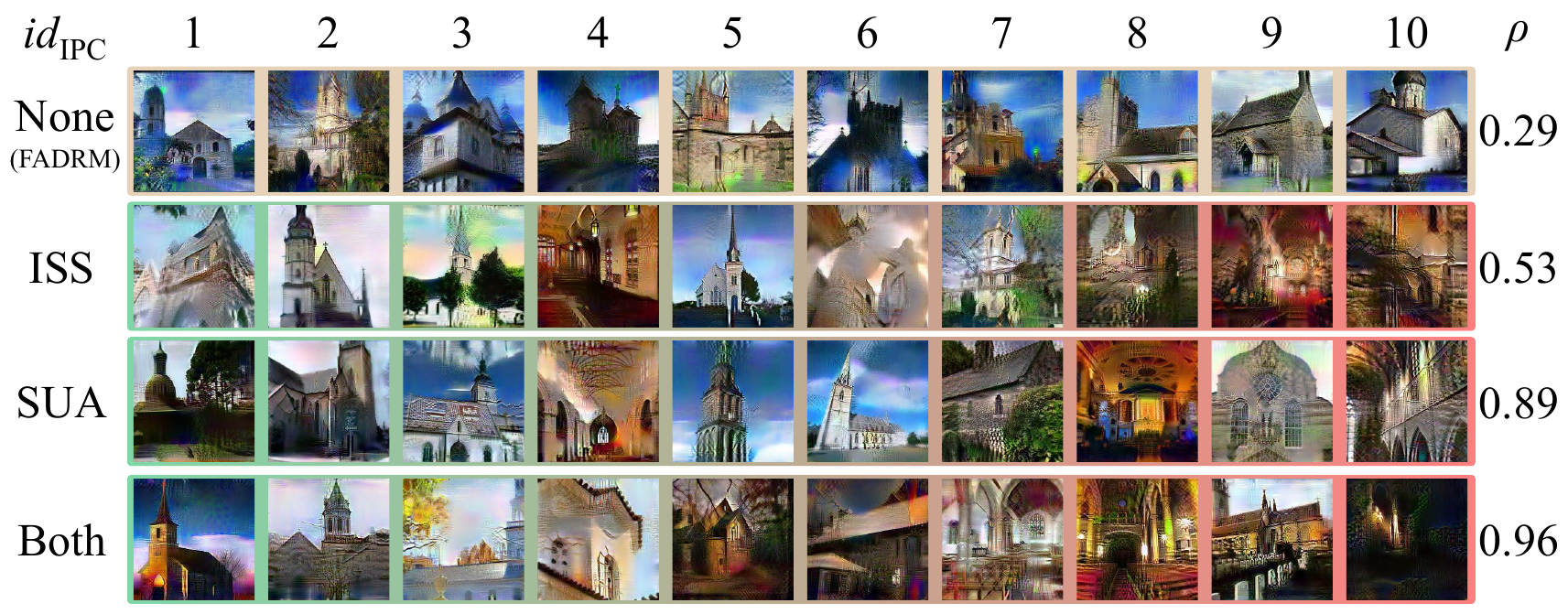}
    \caption{Distilled data on ImageNette with $\mathrm{IPC}=10$ and ResNet18 as the teacher, using neither mechanism (FADRM), ISS, SUA, or both. Spearman's $\rho$ measures the agreement between the difficulty trends of the distilled and original data, with values closer to 1 indicating better agreement.}
    \label{fig:Visualization}
\end{figure}

\section{Conclusion}
In this work, we identify that existing decoupled methods struggle to capture the difficulty variation in the original data fully. To address this issue, we propose Precise Statistical Matching (PSM) by Difficulty. We first introduce GPS to rank and group original data by difficulty using BN statistics. We then propose SUA to update teacher BN statistics through efficient forward propagation, enabling difficulty-specific statistical supervision. Meanwhile, ISS selects real samples from the corresponding difficulty groups for initialization, providing distilled data with starting points consistent with the target difficulty. Extensive experiments show that PSM consistently improves student performance and better preserves the difficulty distribution of the original data.
Future work will further improve PSM under extreme settings (e.g., $\mathrm{IPC}=1$), and investigate the effects of its parameters on datasets with different distributions (e.g., fine-grained datasets).

\clearpage

\bibliographystyle{iclr2027_conference}
\bibliography{main}

\clearpage
\appendix

\section{Additional Related Work}
\label{additional related work}
\paragraph{Dataset Distillation.}
Dataset distillation (DD)~\citep{wang2018datasetdistillation,li2022awesome} compresses a large dataset into a small set of samples with high training utility, substantially reducing storage and training costs while maintaining performance comparable to training on the full dataset~\citep{li2020soft,li2022compressed}. Existing methods mainly include gradient matching~\citep{DC,DSA,DCC,DREAM}, distribution matching~\citep{DM,CAFE,DataDAM,li2025davdd}, trajectory matching~\citep{TESLA,APM,DATM,li2024dataset,li2024iadd}, decoupled distillation~\citep{SRe2L,FADRM,EDC,CDA}, and generative distillation~\citep{HPD,Minimax,D4M,li2025diff,ye2025igds,zou2025dataset,cai2026evlf}.

\paragraph{Decoupled Dataset Distillation.}
SRe$^2$L~\citep{SRe2L} pioneered decoupled DD by separating teacher pretraining from distilled data optimization. SRe$^2$L++~\citep{CVDD} improves robustness through stronger data augmentation and soft labels specific to each batch. FADRM~\citep{FADRM} uses multiscale data residual connections to preserve original information and enrich distilled samples, while FADRM+ extends it to multiple teachers. G-VBSM~\citep{GVBSM} employs lightweight model ensembles to improve generalization across architectures, while CDA~\citep{CDA} stabilizes optimization through a curriculum strategy. LPLD~\citep{LPLD} reexamines the need for large volumes of soft labels and provides a lighter alternative. Despite their strong performance on standard benchmarks, these methods do not explicitly consider the difficulty structure of the original data and its influence on distilled samples.

\paragraph{Batch Normalization.}
Batch Normalization (BN)~\citep{BN} normalizes intermediate features using batch means and variances, improving training speed and stability. Batch Renormalization~\citep{BatchRenorm} uses running statistics to reduce dependence on batch composition and improve training with small batches. Santurkar et al.~\citep{BNOptimization} show that BN smooths the optimization landscape and stabilizes gradients. Luo et al.~\citep{BNRegularization} interpret BN as implicit regularization and analyze its effects on convergence and generalization. AdaBN~\citep{AdaBN} adapts BN statistics across domains, showing that they characterize domain distributions. Overall, BN improves training stability, while its running means and variances capture layer-wise feature distributions, supporting their use as statistical supervision in decoupled dataset distillation.

\section{Theoretical Analysis}

\subsection{The Empirical Relationship Between GPS and Classification Difficulty}
\label{empirial_relationship}
GPS measures the distance between the statistics produced by a sample at the inputs to the teacher's BN layers and the corresponding BN running statistics, thereby characterizing its deviation from the feature distribution learned by the teacher. To analyze the correlation between statistical deviation and classification difficulty, we measure sample difficulty using the teacher's cross-entropy loss on the ground truth label. For a sample $(x_i,y_i)$, its classification difficulty is defined as
\begin{equation}
\ell_i
=
\mathcal{L}_{\mathrm{CE}}\!\left(f_{\theta}(x_i),y_i\right)
=
-\log p_{\theta}(y_i\mid x_i),
\end{equation}
where $p_{\theta}(y_i\mid x_i)$ denotes the probability that the teacher assigns to the ground truth class $y_i$. A larger $\ell_i$ indicates lower confidence in the ground truth class and thus greater classification difficulty.

We conduct this analysis on CIFAR-10/100 and ImageNette/Woof using a pretrained ResNet18 teacher to compute the GPS of each original sample. Suppose the dataset contains $N$ samples, and let $\pi$ denote the index sequence obtained by sorting them in ascending order of GPS:
\begin{equation}
\operatorname{GPS}\!\left(x_{\pi(1)}\right)
\leq
\operatorname{GPS}\!\left(x_{\pi(2)}\right)
\leq
\cdots
\leq
\operatorname{GPS}\!\left(x_{\pi(N)}\right).
\end{equation}
We then compute the corresponding cross-entropy loss $\ell_{\pi(r)}$ for each ordered sample and examine how the classification loss changes along the GPS ranking. As shown in Figure~\ref{fig:gps_difficulty}, the cross-entropy loss generally increases with GPS, and samples with larger GPS values tend to exhibit higher classification losses. This empirical result indicates that the statistical deviation measured by GPS varies consistently with the classification difficulty perceived by the teacher, supporting the use of GPS to rank and group the original samples from easy to difficult.

\begin{figure}[t]
    \centering

    \begin{subfigure}[t]{0.49\textwidth}
        \centering
        \includegraphics[width=\linewidth]
        {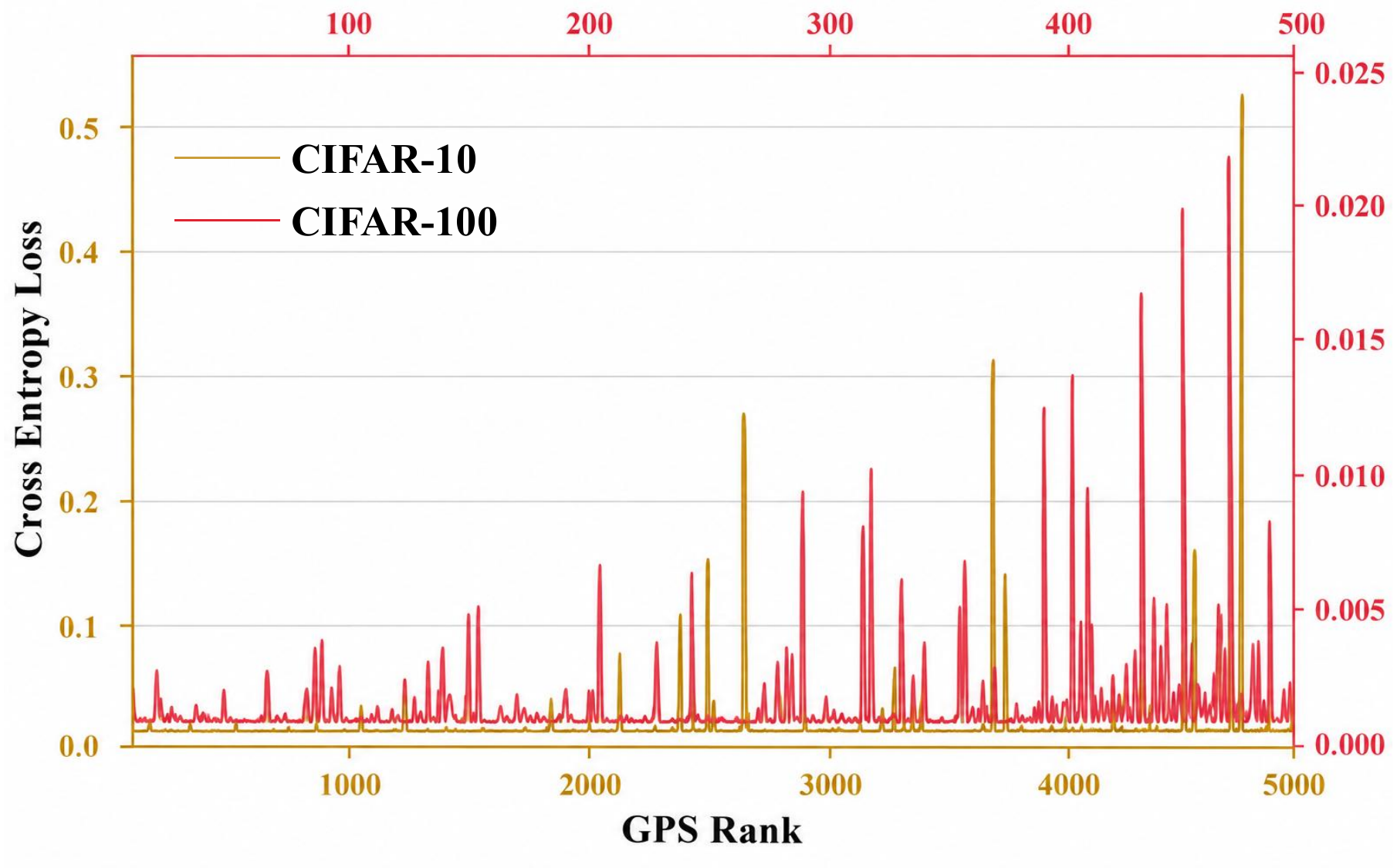}
        \caption{CIFAR-10 and CIFAR-100.}
        \label{fig:gps_cifar}
    \end{subfigure}
    \hfill
    \begin{subfigure}[t]{0.49\textwidth}
        \centering
        \includegraphics[width=\linewidth]
        {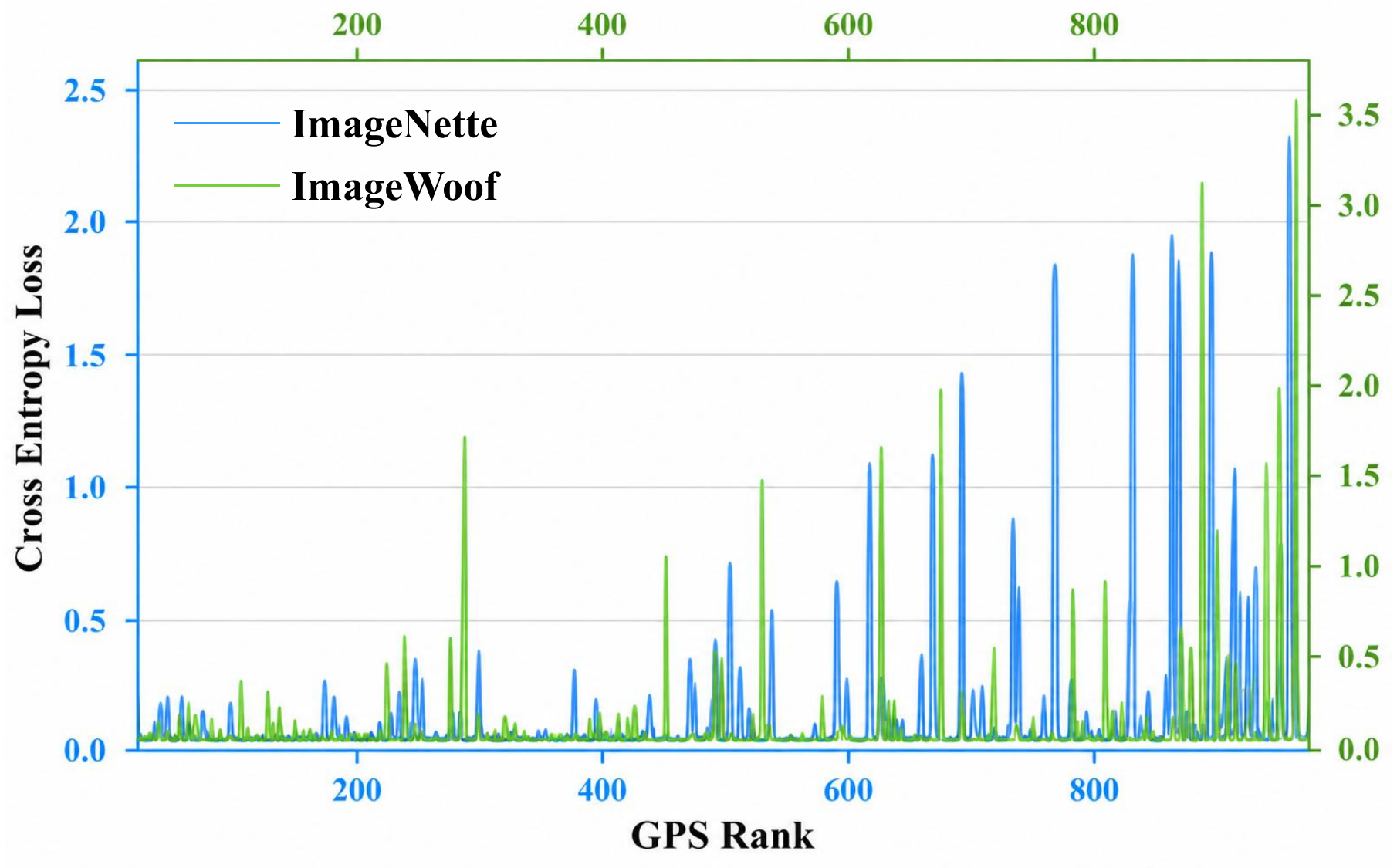}
        \caption{ImageNette and ImageWoof.}
        \label{fig:gps_imagenette_imagewoof}
    \end{subfigure}

    \caption{Relationship between GPS and classification difficulty on different datasets.}
    \label{fig:gps_difficulty}
\end{figure}

\subsection{The Expected Convergence of Statistics}
\label{app:the expected convergence of statistics}
\begin{proof}[Proof of Proposition~\ref{prop:expected_convergence}]
Fix an arbitrary teacher $k$ and difficulty group $g$. Let $\mathcal{B}_t$ denote the $t$-th forward batch independently drawn from $\mathcal{T}_g$, and let $\widehat{\boldsymbol{\tau}}_{k,g}^{(t)}$ denote the batch statistics induced by $\mathcal{B}_t$ at all BN layers. Since the teacher parameters are fixed and the forward batches are identically distributed, we have
\begin{equation}
\mathbb{E}
\left[
\widehat{\boldsymbol{\tau}}_{k,g}^{(t)}
\right]
=
\boldsymbol{\tau}_{k,g}^{\star},
\qquad \forall t.
\label{eq:expected_batch_statistics}
\end{equation}

Applying the BN update rule in~\eqref{eq:bn_running_statistics} componentwise to all running means and variances gives
\begin{equation}
\boldsymbol{\tau}_{k,g}^{(t)}
=
(1-\rho)
\boldsymbol{\tau}_{k,g}^{(t-1)}
+
\rho
\widehat{\boldsymbol{\tau}}_{k,g}^{(t)},
\label{eq:sua_statistics_update}
\end{equation}
where SUA starts from the global statistics stored in the pretrained teacher:
\begin{equation}
\boldsymbol{\tau}_{k,g}^{(0)}
=
\boldsymbol{\tau}_{k}^{\mathrm G}.
\label{eq:sua_initial_statistics}
\end{equation}

Taking expectations in~\eqref{eq:sua_statistics_update} and using~\eqref{eq:expected_batch_statistics}, we obtain
\begin{align}
\mathbb{E}
\left[
\boldsymbol{\tau}_{k,g}^{(t)}
\right]
-
\boldsymbol{\tau}_{k,g}^{\star}
&=
(1-\rho)
\left(
\mathbb{E}
\left[
\boldsymbol{\tau}_{k,g}^{(t-1)}
\right]
-
\boldsymbol{\tau}_{k,g}^{\star}
\right).
\label{eq:sua_expected_error}
\end{align}
Recursively applying~\eqref{eq:sua_expected_error} and substituting~\eqref{eq:sua_initial_statistics} yields
\begin{equation}
\mathbb{E}
\left[
\boldsymbol{\tau}_{k,g}^{(t)}
\right]
-
\boldsymbol{\tau}_{k,g}^{\star}
=
(1-\rho)^t
\left(
\boldsymbol{\tau}_{k}^{\mathrm G}
-
\boldsymbol{\tau}_{k,g}^{\star}
\right).
\label{eq:sua_expected_convergence}
\end{equation}

Since $\rho\in(0,1]$, we have $0\leq 1-\rho<1$. Therefore,
\begin{equation}
\mathbb{E}
\left[
\boldsymbol{\tau}_{k,g}^{(t)}
\right]
-
\boldsymbol{\tau}_{k,g}^{\star}
\xrightarrow{t\rightarrow\infty}
\boldsymbol{0}.
\end{equation}
Thus, the expected discrepancy between the SUA running statistics and the statistics of the current difficulty group decreases geometrically with the number of forward batch updates. This result holds for the running means and variances of all BN layers. Consequently, SUA statistics converge in expectation to those of the current difficulty group and provide the corresponding statistical matching signal for distillation batch $g$.
\end{proof}

\section{More Results}
\label{more results}
\subsection{Comparison with other SOTA Methods}
To further evaluate the effectiveness, we compare PSM with more representative SOTA dataset distillation methods, including the coreset selection methods Random and Herding~\citep{Herding}, the gradient matching method DSA~\citep{DSA}, the distribution matching method DM~\citep{DM}, the trajectory matching methods MTT~\citep{MTT}, DATM~\citep{DATM} and TESLA~\citep{TESLA}, the generative method Minimax~\citep{Minimax}, and the decoupled methods SRe$^2$L and FADRM. Specifically, DSA uses ConvNet as the teacher model, DM uses ConvNet and ResNetAP-10, MTT uses ConvNet-BN, DATM and TESLA use ConvNet-IN, and Minimax uses DiT~\citep{DiT}, while SRe$^2$L, FADRM, and PSM use ResNet18. All methods are evaluated using ResNet18 as the student model.

As shown in Table~\ref{tab:other_sota_results}, coreset selection methods generally achieve lower accuracy, but gradually approach DD methods as IPC increases. For example, Random achieves $75.8\%$ on ImageNette with $\mathrm{IPC}=50$. DD methods perform better in most settings, where gradient matching, distribution matching, trajectory matching, and generative methods use gradients, feature distributions, weight trajectories, and generative priors to guide distillation, respectively. Although these methods compress data effectively, their supervision signals usually describe the overall data distribution in an aggregated form, making it difficult to capture the internal difficulty structure. As dataset scale increases, sample difficulty differences are more easily smoothed by aggregated objectives, causing distilled data to concentrate at similar difficulties and limiting coverage of the original difficulty range. Among decoupled methods, FADRM performs strongly, while PSM preserves a broader difficulty structure, enabling students to learn sample patterns ranging from easy to difficult, and achieving the best results in most settings.

\begin{table*}[t]
\centering
\begingroup
\providecommand{\tbscore}[2]{\ensuremath{#1_{\pm #2}}}
\providecolor{FADRMBlue}{RGB}{226,239,249}
\providecolor{PSMPurple}{RGB}{241,233,248}

\caption{Top-1 accuracy (\%) comparison of representative dataset distillation methods with ResNet18 as the student. Standard deviations are reported when available.}
\label{tab:other_sota_results}

\small
\setlength{\tabcolsep}{3pt}
\renewcommand{\arraystretch}{1.5}

\resizebox{\textwidth}{!}{%
\begin{tabular}{@{}lc*{9}{c}
    >{\columncolor{FADRMBlue}}c
    >{\columncolor{PSMPurple}}c@{}}
\toprule
\multirow{2}{*}{Dataset}
& \multirow{2}{*}{IPC}

& \multicolumn{2}{c}{Coreset} & Gradient & Distribution & \multicolumn{3}{c}{Trajectory} & Generative & \multicolumn{3}{c}{Decoupled} \\
\cmidrule(lr){3-4}
\cmidrule(lr){5-5}
\cmidrule(lr){6-6}
\cmidrule(lr){7-9}
\cmidrule(lr){10-10}
\cmidrule(lr){11-13}
& & Random & Herding & DSA & DM & MTT & DATM & TESLA & Minimax & SRe$^2$L & FADRM & PSM \\
\midrule

\multirow{3}{*}{CIFAR-10}
& 1 & -- & -- & -- & -- & -- & -- & -- & -- & \tbscore{16.6}{0.9} & \tbscore{\underline{19.3}}{0.6} & \tbscore{\textbf{20.9}}{0.1} \\

& 10 & \tbscore{25.1}{0.5} & \tbscore{28.4}{0.1} & \tbscore{42.1}{0.6} & \tbscore{38.2}{1.1} & \tbscore{46.1}{1.4} & 48.66 & \tbscore{\underline{48.9}}{2.2} & -- & \tbscore{29.3}{0.5} & \tbscore{48.2}{0.4} & \tbscore{\textbf{54.5}}{1.3} \\

& 50 & 54.96 & -- & \tbscore{47.8}{0.9} & \tbscore{52.9}{0.4} & \tbscore{58.7}{0.2} & 66.27 & -- & -- & \tbscore{45.0}{0.7} & \tbscore{\underline{80.6}}{0.7} & \tbscore{\textbf{81.1}}{0.1} \\
\midrule

\multirow{3}{*}{CIFAR-100}
& 1 & -- & -- & -- & -- & -- & -- & -- & -- & \tbscore{6.6}{0.2} & \tbscore{\underline{31.3}}{0.2} & \tbscore{\textbf{32.9}}{0.9} \\

& 10 & \tbscore{10.9}{0.1} & \tbscore{13.3}{0.3} & \tbscore{21.9}{0.4} & \tbscore{18.7}{0.5} & \tbscore{26.8}{0.6} & -- & \tbscore{27.1}{0.7} & -- & \tbscore{27.0}{0.4} & \tbscore{\underline{64.7}}{0.4} & \tbscore{\textbf{66.7}}{0.4} \\

& 50 & \tbscore{40.7}{1.0} & -- & \tbscore{43.6}{0.7} & \tbscore{42.6}{0.5} & \tbscore{51.3}{0.4} & \tbscore{51.0}{0.5} & -- & -- & \tbscore{50.2}{0.4} & \tbscore{\underline{69.3}}{0.3} & \tbscore{\textbf{69.7}}{0.3} \\
\midrule

\multirow{3}{*}{ImageNette}
& 1 & -- & -- & -- & -- & -- & -- & -- & -- & \tbscore{19.1}{1.1} & \tbscore{\textbf{28.2}}{0.6} & \tbscore{\underline{25.0}}{0.5} \\

& 10 & \tbscore{55.8}{1.0} & -- & -- & \tbscore{60.9}{0.7} & -- & -- & -- & \tbscore{\underline{64.9}}{0.6} & \tbscore{29.4}{3.0} & \tbscore{64.0}{0.1} & \tbscore{\textbf{67.1}}{0.2} \\

& 50 & \tbscore{75.8}{1.1} & -- & -- & \tbscore{75.0}{1.0} & -- & -- & -- & \tbscore{78.1}{0.6} & \tbscore{40.9}{0.3} & \tbscore{\underline{82.8}}{0.6} & \tbscore{\textbf{84.0}}{0.3} \\
\midrule

\multirow{3}{*}{ImageWoof}
& 1 & -- & -- & -- & -- & -- & -- & -- & -- & \tbscore{13.3}{0.5} & \tbscore{\underline{19.3}}{0.3} & \tbscore{\textbf{20.2}}{0.7} \\

& 10 & \tbscore{27.7}{0.9} & \tbscore{30.2}{1.2} & -- & \tbscore{33.4}{0.7} & -- & -- & -- & \tbscore{37.6}{0.9} & \tbscore{20.2}{0.2} & \tbscore{\underline{43.8}}{0.2} & \tbscore{\textbf{44.5}}{0.5} \\

& 50 & \tbscore{47.9}{1.8} & \tbscore{48.3}{1.2} & -- & \tbscore{46.2}{0.6} & -- & -- & -- & \tbscore{57.1}{0.6} & \tbscore{23.3}{0.3} & \tbscore{\underline{67.7}}{1.0} & \tbscore{\textbf{69.6}}{0.9} \\
\midrule

\multirow{3}{*}{Tiny-ImageNet} & 1 & -- & -- & -- & -- & -- & -- & -- & -- & \tbscore{2.6}{0.1} & \tbscore{\underline{28.6}}{0.1} & \tbscore{\textbf{30.6}}{1.3} \\

& 10 & \tbscore{7.5}{0.1} & \tbscore{9.0}{0.3} & -- & -- & -- & -- & -- & -- & \tbscore{16.1}{0.2} & \tbscore{\underline{46.5}}{0.4} & \tbscore{\textbf{48.1}}{0.3} \\

& 50 & \tbscore{30.1}{0.6} & -- & \tbscore{27.8}{1.4} & \tbscore{31.0}{0.6} & \tbscore{40.3}{0.3} & \tbscore{42.2}{0.2} & -- & -- & \tbscore{41.1}{0.4} & \tbscore{\underline{51.4}}{0.1} & \tbscore{\textbf{51.7}}{0.1} \\
\midrule

\multirow{2}{*}{ImageNet-1K} & 10 & \tbscore{4.4}{0.1} & \tbscore{5.8}{0.1} & -- & -- & -- & -- & \tbscore{7.7}{0.1} & \tbscore{44.3}{0.5} & \tbscore{21.3}{0.6} & \tbscore{\underline{47.8}}{0.4} & \tbscore{\textbf{48.8}}{0.2} \\

& 50 & -- & -- & -- & -- & -- & -- & -- & \tbscore{58.6}{0.3} & \tbscore{46.8}{0.2} & \tbscore{\underline{60.9}}{0.1} & \tbscore{\textbf{61.3}}{0.1} \\

\bottomrule
\end{tabular}%
}
\endgroup
\end{table*}

\begin{figure*}[t]
\centering
\begingroup
\providecommand{\tbscore}[2]{\ensuremath{#1_{\pm #2}}}

\begin{minipage}[t]{0.49\textwidth}
\vspace{0pt}
\centering

\captionof{table}{Top-1 accuracy under different epoch ratios on ImageNette with $\mathrm{IPC}=10$ under the same-architecture protocol.}
\label{tab:epoch_ratio_models}

\small
\setlength{\tabcolsep}{3pt}
\renewcommand{\arraystretch}{1.5}

\resizebox{\linewidth}{!}{%
\begin{tabular}{@{}ccccc@{}}
\toprule
Epoch Ratio & DenseNet121 & MobileNetV2 & ShuffleNetV2 & ResNet18 \\
\midrule

0.1 & \tbscore{68.6}{0.6} & \tbscore{51.5}{0.6} & \tbscore{49.1}{1.1} & \tbscore{65.5}{0.7} \\

0.2 & \tbscore{68.8}{0.7} & \tbscore{51.8}{1.0} & \tbscore{\underline{50.2}}{2.5} & \tbscore{65.4}{0.4} \\

0.3 & \tbscore{68.9}{1.0} & \tbscore{50.0}{0.7} & \tbscore{48.8}{0.1} & \tbscore{64.9}{0.7} \\

0.4 & \tbscore{68.3}{0.9} & \tbscore{53.7}{1.1} & \tbscore{48.0}{2.1} & \tbscore{66.7}{0.7} \\

0.5 & \tbscore{67.4}{1.1} & \tbscore{51.1}{0.5} & \tbscore{\textbf{50.3}}{1.3} & \tbscore{65.9}{0.4} \\

0.6 & \tbscore{67.2}{0.2} & \tbscore{52.3}{0.8} & \tbscore{50.0}{1.5} & \tbscore{66.6}{0.7} \\

0.7 & \tbscore{67.7}{1.2} & \tbscore{\textbf{54.2}}{0.7} & \tbscore{50.1}{1.6} & \tbscore{65.7}{1.1} \\

0.8 & \tbscore{68.9}{0.3} & \tbscore{52.8}{1.5} & \tbscore{49.1}{0.8} & \tbscore{66.6}{0.2} \\

0.9 & \tbscore{\underline{69.1}}{0.6} & \tbscore{\underline{53.9}}{3.4} & \tbscore{49.4}{1.3} & \tbscore{\underline{67.0}}{0.8} \\

1.0 & \tbscore{\textbf{69.1}}{0.7} & \tbscore{52.0}{0.8} & \tbscore{49.5}{0.2} & \tbscore{\textbf{67.1}}{0.2} \\

\bottomrule
\end{tabular}%
}
\end{minipage}
\hfill
\begin{minipage}[t]{0.49\textwidth}
\vspace{0pt}
\centering

\includegraphics[width=\linewidth]
{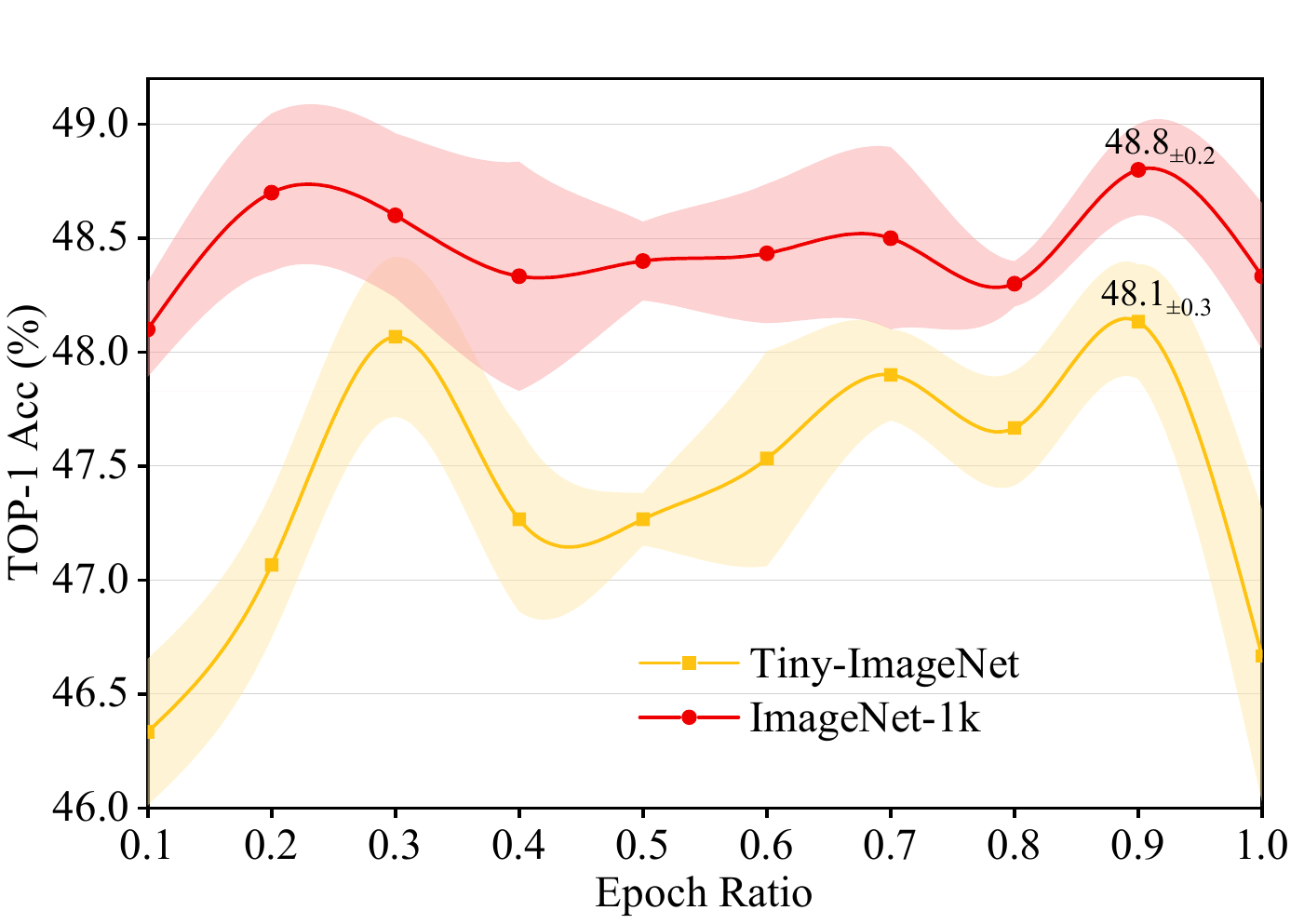}

\captionof{figure}{Top-1 accuracy under different epoch ratios on Tiny-ImageNet and ImageNet-1K with $\mathrm{IPC}=10$ and teacher=student=ResNet18.}
\label{fig:epoch_ratio_tiny_imagenet_imagenet1k}
\end{minipage}
\endgroup
\end{figure*}

\subsection{More Ablation Study}
\paragraph{Optimal Epoch Ratios for other Models.}
To determine the optimal epoch ratio $r_{\mathrm{fwd}}$ for the other teacher models (DenseNet121, MobileNetV2, and ShuffleNetV2) in PSM+, we conduct an ablation study on ImageNette under the same-architecture protocol with $\mathrm{IPC}=10$. As shown in Table~\ref{tab:epoch_ratio_models}, the optimal $r_{\mathrm{fwd}}$ values for DenseNet121, MobileNetV2, and ShuffleNetV2 are $1.0$ ($69.1\%$), $0.7$ ($54.2\%$), and $0.5$ ($50.3\%$), respectively. Overall, models with more parameters require more forward passes to better capture the distribution of the current difficulty group. Since the performance gaps between the optimal $r_{\mathrm{fwd}}$ and $r_{\mathrm{fwd}}=1.0$ are small for MobileNetV2 and ShuffleNetV2, we apply the optimal $r_{\mathrm{fwd}}$ determined for ResNet18 to the other teacher models in PSM+ on other datasets and IPC settings for simplicity.

\paragraph{Epoch Ratio $r_{\mathrm{fwd}}$ on Tiny-ImageNet and ImageNet-1K.}
Due to limited computational resources, we do not further investigate the effect of the ISS screening methods on Tiny-ImageNet and ImageNet-1K. Instead, we only examine the effect of different $r_{\mathrm{fwd}}$ under $\mathrm{IPC}=10$, using ResNet18 as the teacher and student. As shown in Figure~\ref{fig:epoch_ratio_tiny_imagenet_imagenet1k}, consistent with the observations in Figure~\ref{fig:EpochRatio_SelectMethod}, larger $r_{\mathrm{fwd}}$ allows the statistics to better characterize the distribution of the current difficulty group. Both Tiny-ImageNet and ImageNet-1K achieve the highest accuracy at $r_{\mathrm{fwd}}=0.9$, reaching $48.1\%$ and $48.8\%$, respectively. Therefore, we set $r_{\mathrm{fwd}}=0.9$ for the other experiments on Tiny-ImageNet and ImageNet-1K.

\paragraph{Patch Grids in the Initial Image.}
Since PSM modifies the sample initialization method, we further investigate the effect of different patch grids on performance. We use ResNet18 as the teacher and student with $\mathrm{IPC}=10$, and the specific patch grid settings are illustrated in Figure~\ref{fig:patch_grids}. As shown in Table~\ref{tab:patch_grid}, the $1\times1$ patch grid achieves the best performance on all datasets, consistent with the findings of~\citep{FADRM}. When the patch grid increases to $2\times2$, compressing local image regions may cause information loss, which is more evident on high-resolution datasets. Therefore, we use the $1\times1$ initialization setting in all experiments.

\begin{figure*}[t]
\centering
\begingroup
\providecommand{\tbscore}[2]{\ensuremath{#1_{\pm #2}}}

\begin{minipage}[t]{0.49\textwidth}
\vspace{0pt}
\centering

\captionof{table}{Top-1 accuracy under different patch grid settings with $\mathrm{IPC}=10$, using ResNet18 as both the teacher and student models.}
\label{tab:patch_grid}

\small
\setlength{\tabcolsep}{5pt}
\renewcommand{\arraystretch}{1.5}

\resizebox{\linewidth}{!}{%
\begin{tabular}{@{}ccccc@{}}
\toprule
Patch Grid & CIFAR-10 & CIFAR-100 & ImageNette & ImageWoof \\
\midrule

$1\times1$ & \tbscore{54.5}{1.3} & \tbscore{66.7}{0.4} & \tbscore{67.1}{0.2} & \tbscore{44.5}{0.5} \\

$2\times2$ & \tbscore{54.3}{1.9} & \tbscore{65.0}{0.4} & \tbscore{65.1}{0.9} & \tbscore{40.2}{1.9} \\

\bottomrule
\end{tabular}%
}
\end{minipage}
\hfill
\begin{minipage}[t]{0.49\textwidth}
\vspace{0pt}
\centering

\includegraphics[height=2.5cm,keepaspectratio]
{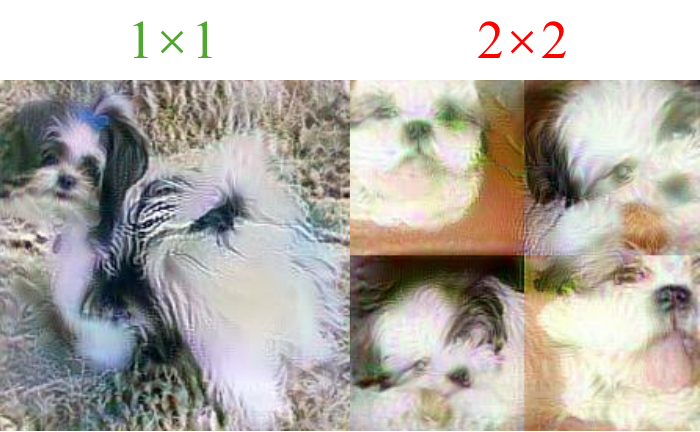}

\captionof{figure}{Visualization of different patch grids.}
\label{fig:patch_grids}

\end{minipage}

\endgroup
\end{figure*}

\begin{table*}[t]
\centering
\caption{Hyperparameter settings during pretraining and GPS.}
\label{tab:hyperparameter_settings}

\small
\setlength{\tabcolsep}{5.8pt}
\renewcommand{\arraystretch}{1.15}

\begin{tabular}{@{}ccccccc@{}}
\toprule
Hyperparameter & CIFAR-10 & CIFAR-100 & Tiny-ImageNet & ImageNette & ImageWoof & ImageNet-1K \\
\midrule
learning rate & $1.00\mathrm{E}{-03}$ & $1.00\mathrm{E}{-03}$ & $1.00\mathrm{E}{-02}$ & $1.00\mathrm{E}{-02}$ & $1.00\mathrm{E}{-02}$ & \multirow{3}{*}{PyTorch} \\
optimizer & Adam & Adam & SGD & SGD & SGD & \\
scheduler & cos & cos & cos & cos & cos & \\
epoch & 150 & 100 & 50 & 150 & 200 & 90 \\
batch & 512 & 512 & 128 & 64 & 64 & 64 \\
$\beta$ & 1 & 1 & 1 & 1 & 1 & 1 \\
$\epsilon$ & $1.00\mathrm{E}{-06}$ & $1.00\mathrm{E}{-06}$ & $1.00\mathrm{E}{-06}$ & $1.00\mathrm{E}{-06}$ & $1.00\mathrm{E}{-06}$ & $1.00\mathrm{E}{-06}$ \\
\bottomrule
\end{tabular}
\end{table*}

\begin{algorithm}[H]
\caption{Global Precision Score (GPS)}
\label{alg:gps}
\begin{algorithmic}[1]
\Require Original set $\mathcal{T}=\{(x_i,y_i)\}_{i=1}^{N}$; pretrained teachers $\{f_{\theta_k}\}_{k=1}^{K}$; repeated evaluations $R$

\For{each teacher $k=1,\ldots,K$}
    \For{$r=1,\ldots,R$}
        \For{each sample $x_i\in\mathcal{T}$}
            \State Forward $x_i$ through $f_{\theta_k}$ and compute its statistics at all BN layers
            \State Compute $s_{i,k}^{(r)}$ using the statistical distances in~\eqref{eq:gps}
        \EndFor
    \EndFor
    \State $s_{i,k}\gets \frac{1}{R}\sum_{r=1}^{R}s_{i,k}^{(r)}$
    \State Rank $s_{i,k}$ within each class: $r_{i,k}\gets\operatorname{rank}_{c}(s_{i,k})$
\EndFor

\State $\operatorname{GPS}(x_i)\gets\frac{1}{K}\sum_{k=1}^{K}r_{i,k}$ for each $x_i\in\mathcal{T}$
\State \Return Class-wise GPS rankings from easy to hard

\end{algorithmic}
\end{algorithm}

\section{Implementation Details}
\label{implementation details}
\subsection{Pretraining.}
As shown in Table~\ref{tab:hyperparameter_settings}, we report the pretraining hyperparameters for ShuffleNetV2, ResNet18, ResNet50, MobileNetV2, and DenseNet121 on each dataset. To ensure that all teacher models in PSM+ undergo the same number of forward passes under the same $r_{\mathrm{fwd}}$, we use the same number of pretraining epochs and the same pretraining configuration for all models on each dataset. For ImageNet-1K, we directly use the official pretrained weights provided by PyTorch.

After pretraining the teacher models, we compute GPS. The batch size is kept the same as in pretraining, with $\beta=1.0$ and $\epsilon=1\times10^{-6}$. The detailed procedure is shown in Algorithm~\ref{alg:gps}.

\begin{table*}[t]
\centering
\caption{Hyperparameter settings during distillation.}
\label{tab:psm_hyperparameters}

\small
\setlength{\tabcolsep}{4pt}
\renewcommand{\arraystretch}{1.15}

\begin{tabular}{@{}ccccccc@{}}
\toprule
Hyperparameter & CIFAR-10 & CIFAR-100 & Tiny-ImageNet & ImageNette & ImageWoof & ImageNet-1K \\
\midrule
Epoch ratio & 0.9 & 0.9 & 0.9 & 1.0 & 0.4 & 0.9 \\
Screening method & Back & Back & Back & Front & Back & Back \\
PSM+ teachers & \multicolumn{6}{c}{ShuffleNetV2,  MobileNetV2,  DenseNet121,  ResNet18} \\
\bottomrule
\end{tabular}
\end{table*}

\begin{algorithm}[t]
\caption{Precise Statistical Matching}
\label{alg:psm_distillation}
\begin{algorithmic}[1]
\Require Difficulty groups $\{\mathcal{T}_{g,c}\}$ obtained by GPS; pretrained teachers $\{f_{\theta_k}\}_{k=1}^{K}$; forward ratio $r_{\mathrm{fwd}}$; pretraining epochs $\{E_k^{\mathrm{pre}}\}_{k=1}^{K}$; optimization iterations $T$
\Ensure Distilled dataset $\widetilde{\mathcal{T}}$

\State Initialize $\widetilde{\mathcal{T}}\gets\varnothing$

\For{$g=1,\ldots,\mathrm{IPC}$}

    \State \textbf{SUA:} construct difficulty-specific statistical supervision
    \For{each teacher $k=1,\ldots,K$}
        \State Freeze $\theta_k$ and initialize BN statistics with $\boldsymbol{\tau}_{k}^{\mathrm G}$
        \State Forward $\mathcal{T}_{g}=\bigcup_c\mathcal{T}_{g,c}$ through $f_{\theta_k}$ for $\left\lfloor r_{\mathrm{fwd}}E_k^{\mathrm{pre}}\right\rfloor$ epochs
        \State Update only BN running statistics according to~\eqref{eq:bn_running_statistics}
        \State Store the updated statistics as $\boldsymbol{\tau}_{k,g}^{\mathrm{SUA}}$
    \EndFor

    \State \textbf{ISS:} initialize distilled data with matched difficulty
    \For{each class $c$}
        \State Select real samples from $\mathcal{T}_{g,c}$ using ISS
        \State Initialize $\widetilde{x}_{g,c}^{(0)}$ with the selected samples
    \EndFor

    \State \textbf{Distillation:}
    \For{$t=1,\ldots,T$}
        \State Select teacher $k$ and use $\boldsymbol{\tau}_{k,g}^{\mathrm{SUA}}$ as its statistical target
        \State Update $\widetilde{\mathcal{T}}_g$ by minimizing~\eqref{eq:decoupled_distillation} with the SUA statistics
    \EndFor

    \State $\widetilde{\mathcal{T}}\gets\widetilde{\mathcal{T}}\cup\widetilde{\mathcal{T}}_g$

\EndFor

\State \Return $\widetilde{\mathcal{T}}$

\end{algorithmic}
\end{algorithm}

\subsection{Distillation.}
For PSM+, to simplify the experimental setup, we use ShuffleNetV2, ResNet18, MobileNetV2, and DenseNet121 as teacher models on all datasets, while using the same teacher model settings as FADRM+. During the forward passes of SUA, the batch size fed into the teacher models is kept the same as that used during pretraining to reduce statistical bias. The remaining hyperparameters are shown in Table~\ref{tab:psm_hyperparameters}, and the detailed procedure is provided in Algorithm~\ref{alg:psm_distillation}. It is worth noting that when $\mathrm{IPC}=1$, there is only one difficulty group, meaning that the entire dataset is treated as a single group. In this case, only ISS takes effect in PSM. Therefore, as shown in Table~\ref{tab:main_results}, PSM and FADRM(+) achieve similar performance on most datasets when $\mathrm{IPC}=1$.

\subsection{Soft Label Generation.}
To maintain consistency in the soft label distribution, we use the original pretrained teachers to generate soft labels, rather than the teachers after forward passes. For FADRM+ and PSM+, we assign equal weights to all teachers during soft label generation to simplify the experimental setup.

\section{More Visualization}
To more intuitively illustrate the changes in the difficulty of distilled samples, we visualize the samples generated by FADRM and PSM with ResNet18 as the teacher model and $\mathrm{IPC}=10$. Due to space limitations, we only show results on CIFAR-10, ImageNette, and ImageWoof.

As shown in Figure~\ref{fig:FADRM_cifar10}, ~\ref{fig:FADRM_imagenette} and~\ref{fig:FADRM_imagewoof}, the distilled samples generated by FADRM are typically concentrated within a relatively narrow difficulty range. In contrast, as shown in Figure~\ref{fig:PSM_cifar10}, ~\ref{fig:PSM_imagenette} and~\ref{fig:PSM_imagewoof}, PSM generates distilled samples that cover a broader difficulty range, promoting the student model to learn a more complete difficulty structure.

\begin{figure}[H]
    \vspace{15pt}
    \centering
    \includegraphics[width=\textwidth]{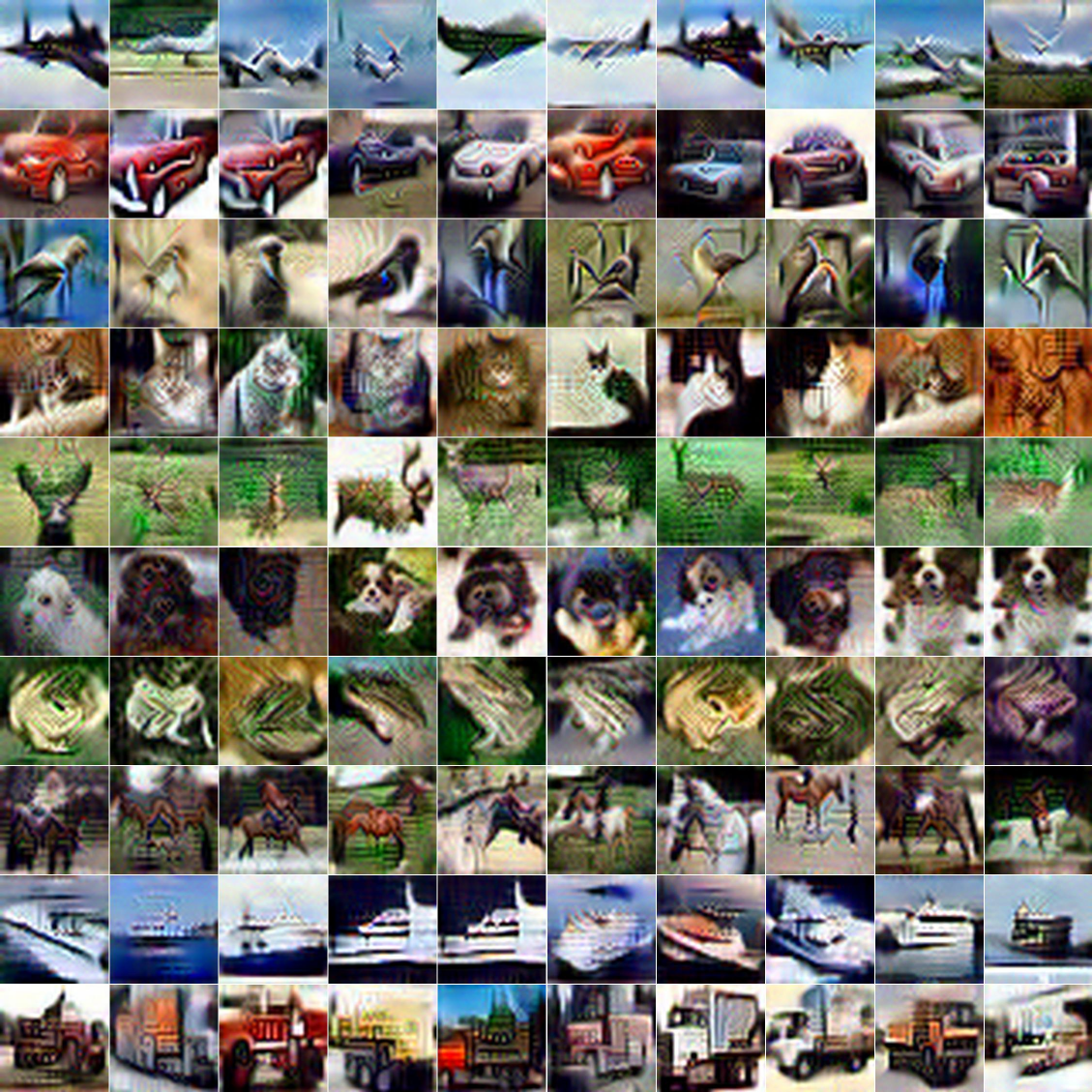}
    \caption{Visualization of the distilled CIFAR-10 samples generated by FADRM with $\mathrm{IPC}=10$ and teacher=ResNet18.}
    \label{fig:FADRM_cifar10}
\end{figure}

\begin{figure}[t]
    \centering
    \includegraphics[width=\textwidth]{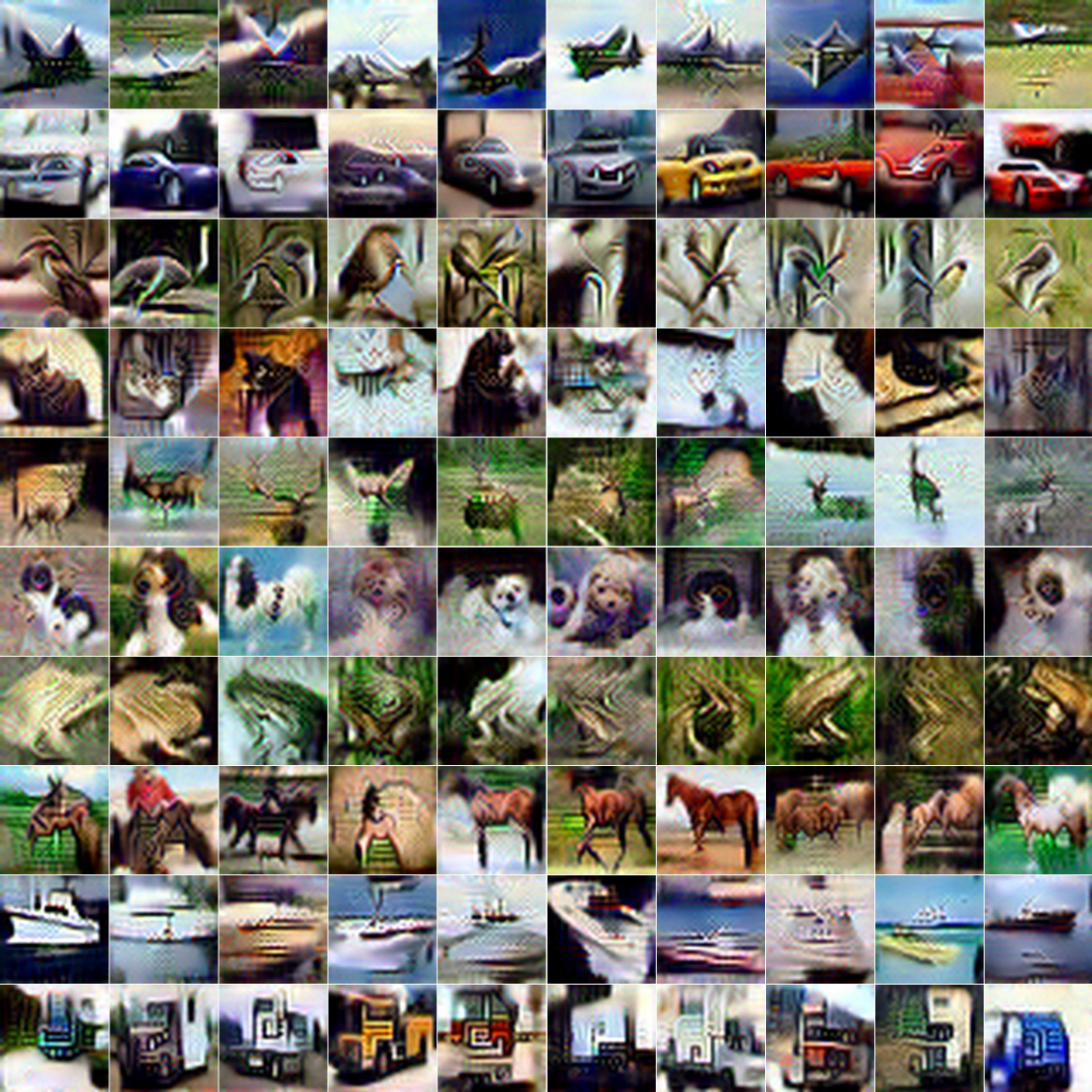}
    \caption{Visualization of the distilled CIFAR-10 samples generated by PSM with $\mathrm{IPC}=10$ and teacher=ResNet18.}
    \label{fig:PSM_cifar10}
\end{figure}

\begin{figure}[t]
    \centering
    \includegraphics[width=\textwidth]{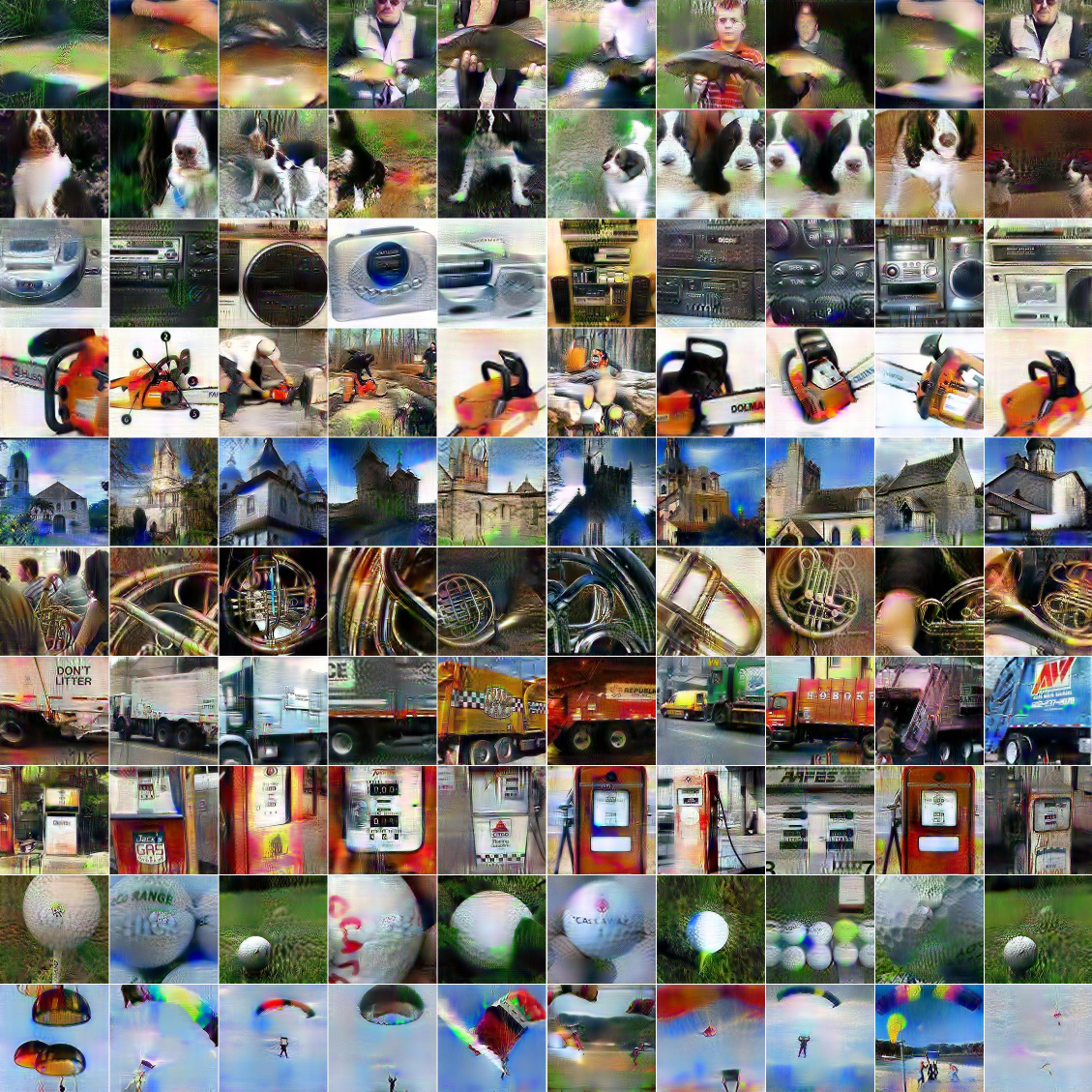}
    \caption{Visualization of the distilled ImageNette samples generated by FADRM with $\mathrm{IPC}=10$ and teacher=ResNet18.}
    \label{fig:FADRM_imagenette}
\end{figure}

\begin{figure}[t]
    \centering
    \includegraphics[width=\textwidth]{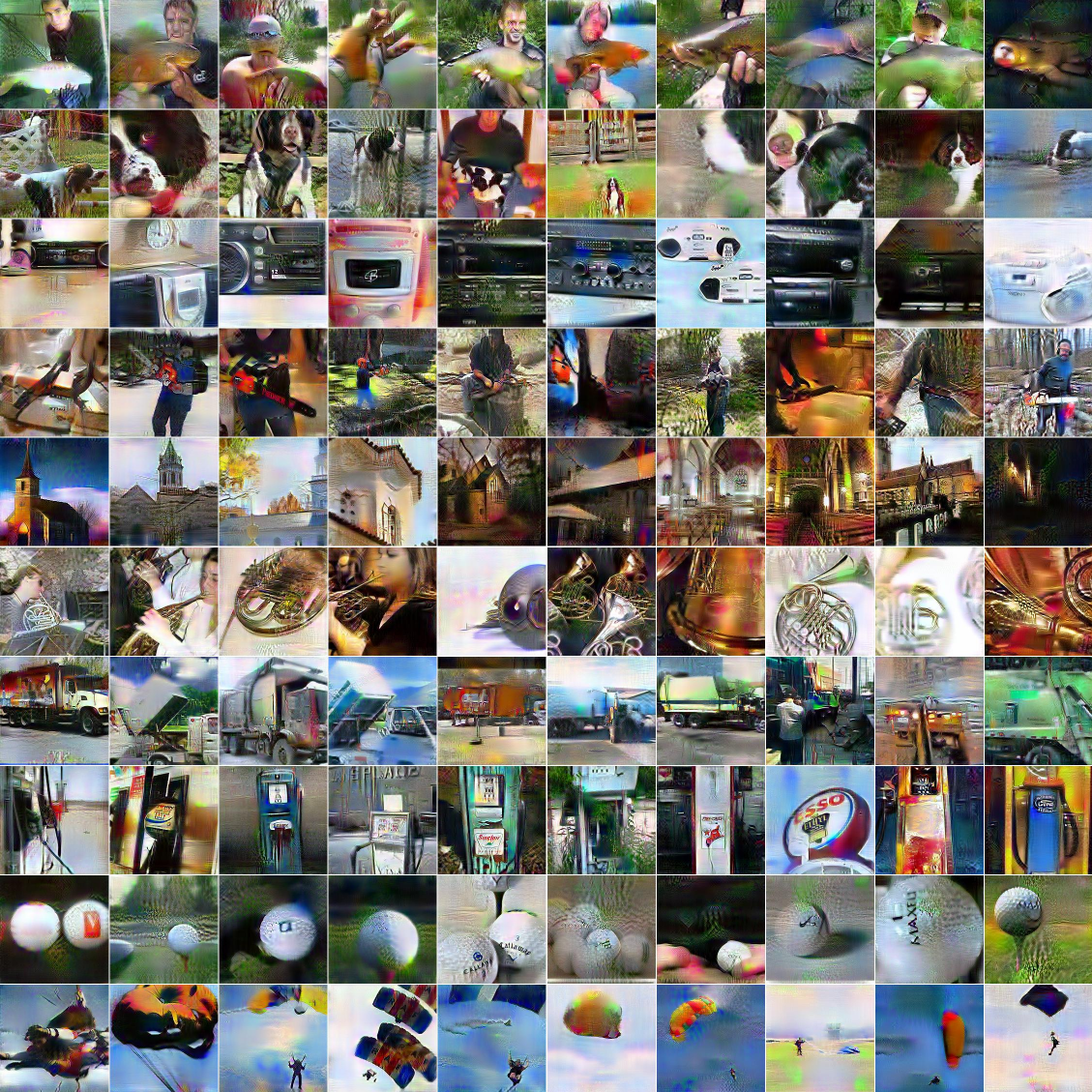}
    \caption{Visualization of the distilled ImageNette samples generated by PSM with $\mathrm{IPC}=10$ and teacher=ResNet18.}
    \label{fig:PSM_imagenette}
\end{figure}

\begin{figure}[t]
    \centering
    \includegraphics[width=\textwidth]{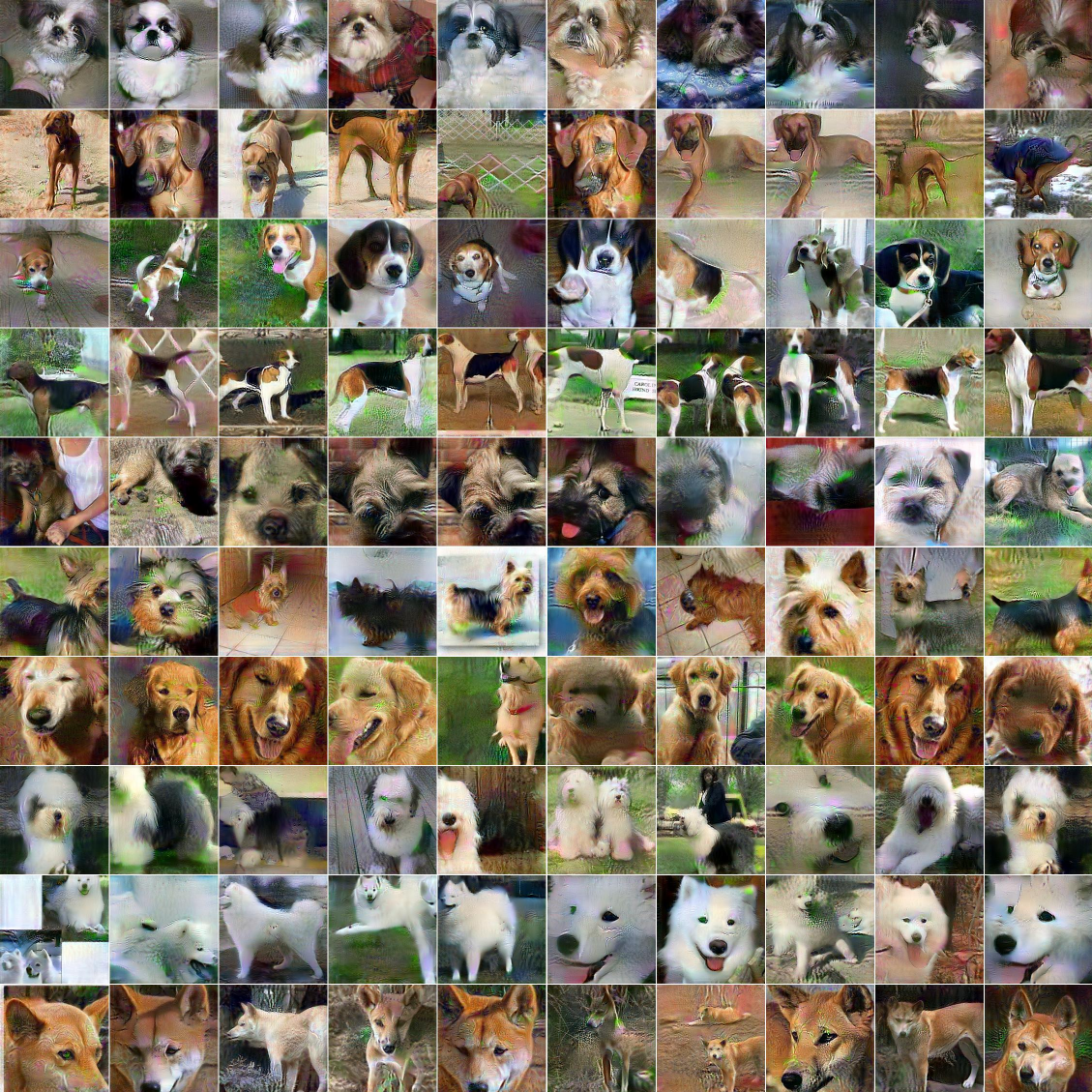}
    \caption{Visualization of the distilled ImageWoof samples generated by FADRM with $\mathrm{IPC}=10$ and teacher=ResNet18.}
    \label{fig:FADRM_imagewoof}
\end{figure}

\begin{figure}[t]
    \centering
    \includegraphics[width=\textwidth]{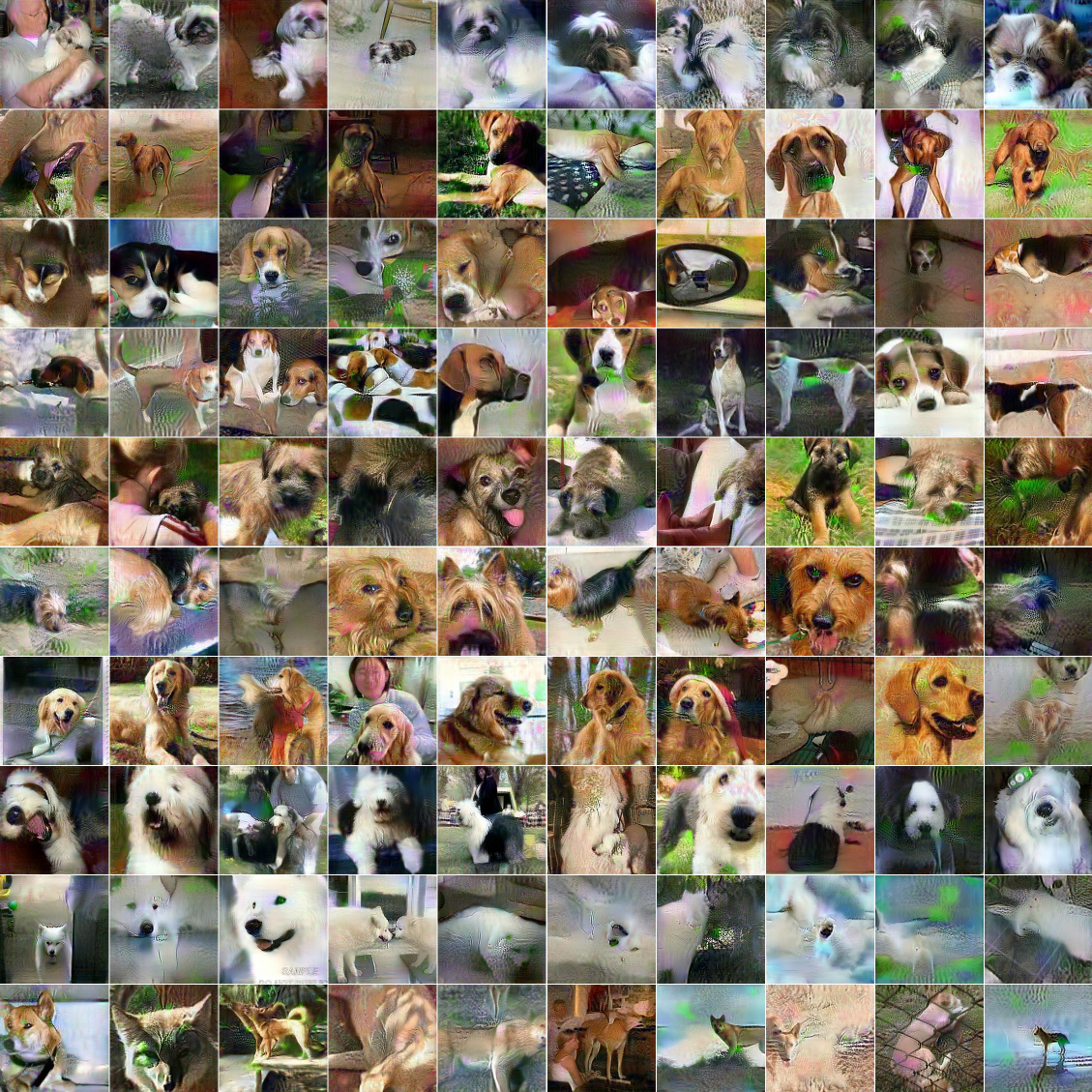}
    \caption{Visualization of the distilled ImageWoof samples generated by PSM with $\mathrm{IPC}=10$ and teacher=ResNet18.}
    \label{fig:PSM_imagewoof}
\end{figure}

\end{document}